\newenvironment{proof}[1][Proof]{\begin{trivlist}
\item[\hskip \labelsep {\bfseries #1}]}{\end{trivlist}}
\newtheorem{theorem}{Theorem}
\newtheorem{lemma}{Lemma}
\newcommand{\tikzmark}[1]{\tikz[overlay,remember picture,baseline] \node [anchor=base] (#1) {};}
\definecolor{gray}{RGB}{190,190,190}
\definecolor{darkgreen}{rgb}{0,0.5,0} 
\definecolor{fullred}{rgb}{0.85,.0,.1} 
\definecolor{brown}{rgb}{0.65,0.16,0.16}
\newcommand{\AlgRRT}{\ensuremath{{\mathrm{RRT}}}}
\newcommand{\AlgPRM}{\ensuremath{{\mathrm{PRM}}}}
\newcommand{\AlgRRG}{\ensuremath{{\mathrm{RRG}}}}
\newcommand{\AlgPRMstar}{\ensuremath{{\mathrm{PRM}^*}}}
\newcommand{\AlgRRTstar}{\ensuremath{{\mathrm{RRT}^*}}}
\newcommand{\AlgRRTsharp}{\ensuremath{{\mathrm{RRT}^{\tiny \#}}}}
\newcommand{\AlgAstar}{\ensuremath{{\mathrm{A}^*}}}
\DeclareMathOperator*{\argmin}{arg\,min}
\newcommand{\reals}{\mathbb{R}}  % Reals
\newcommand{\naturals}{\mathbb{N}} 	% Natural numbers
\newcommand{\X}{\mathcal{X}} 		%Configuration space
\newcommand{\Xfree}{\X_{\mathrm{free}}} %Free Configuration space
\newcommand{\Xgoal}{\X_{\mathrm{goal}}} % Goal set in the configuration space
\newcommand{\Xobs}{\X_{\mathrm{obs}}} 	% Obstacle region in the configuration space
\newcommand{\succx}{x^{\prime}}	% randomly sampled state x
\newcommand{\xinit}{x_{\mathrm{init}}}	% initial state x	
\newcommand{\xgoal}{x_{\mathrm{goal}}}	% goal state x	
\newcommand{\graph}[1]{\mathcal{#1}}		% Graph data structure
\newcommand{\Vgoal}{V_{\mathrm{goal}}}	% Goal vertex set
\newcommand{\Vprom}{V_{\mathrm{prom}}}	% Promising vertices
\newcommand{\Tmuk}{T_{\mu^{k}}}
\newcommand{\Jmuk}{J_{\mu^{k}}}
\newcommand{\Jmuki}{J_{\mu^{k,i}}}
\newcommand{\duration}{\mathtt{len}} 
\newcommand{\costValue}{\mathtt{c}}			% cost-value of the path
\newcommand{\PrcExtend}{\mathtt{Extend}} % Extend procedure
\newcommand{\PrcReplan}{\mathtt{Replan}} %ReduceInconsistency procedure
\newcommand{\PrcSuccessor}{\mathtt{succ}} % Successor procedure for a vertex x on a graph
\newcommand{\PrcPredecessor}{\mathtt{pred}} % Predecessor procedure for a vertex x on a graph
\newcommand{\hideMaterial}[1]{}
\newcommand{\hideoldproofofnegresult}[1]{ }
\newcommand{\QED}{\hfill$\Box$}
\newtheorem{corollary}{Corollary}
\title{\LARGE \bf
Incremental Sampling-based Motion Planners \\ Using Policy Iteration Methods
}
\author{Oktay Arslan\thanks{Oktay Arslan performed this research during his doctoral studies as a Robotics, PhD student while at
Georgia Institute of Technology, Atlanta, GA 30332-0150, USA.
}\\
\textit{Mobility and Robotic Systems Section}\\
\textit{Jet Propulsion Laboratory, Pasadena,
CA 91109-8099, USA}\\
\textit{Email: {oktay.arslan@jpl.nasa.gov}}\\~\\ 
Panagiotis Tsiotras\\
\textit{Daniel Guggenheim School of Aerospace Engineering}\\ 
\textit{Institute for Robotics and Intelligent Machines}\\ 
\textit{Georgia Institute of Technology, Atlanta, GA 30332-0150,
USA}\\
\textit{Email: {tsiotras@gatech.edu}}}
\date{September 14, 2015\\[3pt] Revised June 20, 2016}
\begin{document}

\maketitle
\vspace*{-5mm}
\begin{abstract}
Recent progress in randomized motion planners has led to the development of a new class of sampling-based algorithms that provide asymptotic optimality guarantees, notably the \AlgRRTstar{} and the \AlgPRMstar{} algorithms. 
Careful analysis reveals that the so-called ``rewiring'' step in these algorithms can be interpreted as a local policy iteration (PI) step (i.e., a local policy evaluation step followed by a local policy improvement step) so that \textit{asymptotically}, as the number of samples tend to infinity, both algorithms converge to the optimal path almost surely (with probability 1).
Policy iteration, along with value iteration (VI) are common methods for solving dynamic programming (DP) problems. 
Based on this observation, recently, the \AlgRRTsharp{} algorithm has been proposed, which performs, during each iteration, Bellman updates (aka``backups'') on those vertices of the graph that have the potential of being part of the optimal path (i.e., the ``promising'' vertices).
The \AlgRRTsharp{} algorithm thus utilizes dynamic programming ideas 
and implements them incrementally on randomly generated graphs to obtain high quality solutions.
In this work, and based on this key insight, we explore a different class of dynamic programming algorithms for solving shortest-path problems 
on random graphs generated by iterative sampling methods.
These class of algorithms utilize policy iteration instead of value iteration, and thus are better suited for massive parallelization.
Contrary to   the \AlgRRTstar{} algorithm, the policy improvement during the rewiring step is not performed only locally but rather on a set of vertices that are
classified as ``promising'' during the current iteration.
This tends to speed-up the whole process.
The resulting algorithm, aptly named Policy Iteration-\AlgRRTsharp{}  (PI-\AlgRRTsharp{}) is the first of a new class of DP-inspired algorithms  for randomized motion planning that utilize PI methods.
\end{abstract}

\section{Introduction} \label{section:introduction}

Robot motion planning is one of the fundamental problems in robotics.
It poses several challenges due to the high-dimensionality of the (continuous) search space, the complex geometry of unknown infeasible regions, the possibility of a continuous action space, and the presence of differential constraints~\cite{reif1979complexity}.
A commonly  used approach to solve this problem is to form a graph by uniform or non-uniform discretization of the underlying continuous search space and employ one of the popular graph-based search-based methods (e.g., \AlgAstar{}, Dijkstra) to find a low-cost \textit{discrete} path between the initial and the final points. These approaches essentially work as abstractions of the underlying problem and hence the quality of the solution depends on the level and fidelity of the underlying abstraction.
Despite this drawback, representing the robot motion planning problem as a graph search problem has several merits, especially since heuristic graph search-based methods provide strong theoretical guarantees such as completeness, optimality, or bounded suboptimality~\cite{dijkstra1959note,hart1968formal,pearl1984heuristics}.
Unfortunately, graph search methods are not scalable to high-dimensional problems since the resulting graphs typically have an exponentially large number of vertices as the dimension of the problem increases.
This major shortcoming of grid-based graph search planners has resulted in the development of randomized (i.e., probabilistic, sampling-based) planners
that do not construct the underlying search graph a priori.
Such planners, notably, \AlgPRM{}~\cite{Kavraki1996} and \AlgRRT{}~\cite{lavalle2000rapidly,lavalle2001randomized,lavalle2006planning},
have been proven to be successful in solving many high-dimensional real-world planning problems.
These planners are simple to implement, require less memory, work efficiently for high-dimensional problems, but
come with a relaxed notion of completeness, namely, \textit{probabilistic completeness}. 
That is, the probability that the planner fails to return a solution, if one exists, decays to zero as the number of samples approaches infinity. However, the resulting paths could be arbitrarily suboptimal~\cite{karaman2011sampling}.

Recently, asymptotically optimal variants of these algorithms, such as \AlgPRMstar{} and \AlgRRTstar{} have been proposed~\cite{karaman2011sampling},
in order to remedy the undesirable behavior of the original \AlgRRT{}-like algorithms.
The seminal work of \cite{karaman2011sampling} has sparked a renewed interest to
asymptotically optimal probabilistic, sampling-based motion planners.
Several variants have been proposed that utilize the original ideas of \cite{karaman2011sampling}; a partial list includes~\cite{arslan2013useofrelaxation,arslan2015dynamicprogramming,arslan2015phdthesis,janson2015fast,otte2015mathrm}.

Careful analysis on the prototypical asymptotically optimal motion planning algorithm, that is,  the \AlgRRTstar{} algorithm,
reveals that its optimality guarantees are the result of (at first glance) hidden ideas based on dynamic programming principles.
In hindsight, this is hardly surprising.
It should be noted however,  that the explicit connection between asymptotically optimal, sampling-based algorithms operating on incrementally constructed random graphs and dynamic programming is not immediately obvious; indeed, in \AlgRRTstar{}  there is no mention of value functions or Bellman updates, although the local ``rewiring'' step in the  \AlgRRTstar{} algorithm can be interpreted as a local policy improvement step on the underlying random graph (more details about this observation are given in Section~\ref{section:random_geometric_graphs}).

Based on this key insight, one can therefore draw from the rich literature of dynamic programming and reinforcement learning in order to design suitable algorithms that compute optimal paths on incrementally constructed random graphs.
The recently proposed \AlgRRTsharp{} algorithm, for instance, utilizes a Gauss-Seidel version of asynchronous value
iteration~\cite{arslan2013useofrelaxation,arslan2015dynamicprogramming} to speed up the convergence of \AlgRRTstar{}.
Extensions based on similar ideas as  the \AlgRRTsharp{} algorithm include the FMT* algorithm~\cite{janson2015fast}, the RRT$^\mathrm{x}$ algorithm~\cite{otte2015mathrm}, and the BIT* algorithm~\cite{gammell2015batch}.
All the previous algorithms use Bellman updates (equivalently, value iterations) to propagate the cost-to-come or cost-to-go for each vertex in the graph. Vertices are ranked according to these values (along with, perhaps, an additional heuristic) and are put into a queue. The order of vertices to be relaxed (i.e., participate in the Bellman update) are chosen from the queue.
Different orderings of the vertices in the queue result in different variations of the same theme, but all of these algorithms -- one way of another -- perform a
form of asynchronous value iteration.

%It is very well-known that dynamic programming gives a nice characterization of optimal solutions for some sequential decision making problems, i.e., those of optimal substructure. One can use value iteration (VI) or policy iteration (PI) algorithms to compute optimal policies for such problems.
%However, these algorithms usually are run over a discrete state and action space which prevents us applying these methods directly on robot motion planning problem.
%Therefore, instead of using a uniform discretization of the state space, we used random graphs to get a non-uniform discretization scheme with some optimality guarantee and used a dynamic programming method to extract optimal solutions.

In this work, we depart from the previous VI-based algorithms and we propose, instead, a novel class of algorithms based on policy-iteration (PI). Some preliminary results were presented in~\cite{arslan2015dynamicprinciples}. Policy iteration is an alternative to value iteration for solving dynamic programming problems and fits naturally into our framework, in the sense that a policy in a graph search amounts to nothing more but an assignment of a (unique) parent to each vertex.
Use of policy iteration has the following benefits: first, no queue is needed to keep track of the cost of each vertex. 
A subset of vertices is selected for Bellman updates, and policy improvement on these vertices can be done in parallel at each iteration. 
Second, for a given graph, determination of the \textit{optimal} policy is obtained after a \textit{finite} number of iterations since the policy space is finite~\cite{Bertsekas2000}.
The determination of the optimal value for each vertex, on the other hand, requires an infinite number of iterations.
More crucially, and in order to find the optimal policy, only the correct ordering of the vertices is needed, not their exact value.
This can be utilized to develop approximation algorithms that speed up convergence.
Third, although policy iteration methods are often slower than value iteration methods, they tend to be better amenable for parallelization and are faster if the
structure of the problem is taken into consideration during implementation.

%The proposed algorithm (PI-\AlgRRTsharp{}) leverages two main mathematical tools to solve single-query robot motion planning problems: random graphs
%in order to obtain a sparse representation of the underlying continuous search space via a modified version of the Rapidly-exploring Random Graph (\AlgRRG) algorithm; and dynamic programming in order to efficiently compute the optimal solutions encoded in the underlying incrementally growing graph.

\section{Problem Formulation and Notation} \label{section:formulation}

Let $\X$ denote the configuration (search) space, which is assumed to be an open subset of $\reals^{d}$, where $d \in \naturals$ with $d \geq 2$.
The \emph{obstacle region} and the \emph{goal region} are denoted by $\Xobs$ and $\Xgoal$, respectively, both assumed to be closed sets.
The obstacle-free space is defined by $\Xfree = \X \setminus \Xobs$.
Elements of $\X$ are the states (or configurations) of the system.
Let the \emph{initial configuration} of the robot be denoted by $\xinit \in \Xfree$.
The (open) neighborhood of a state $x \in \X$ is the open ball of radius $r > 0$ centered at $x$, that is, $B_{r}(x) = \{ x^{\prime} \in \X : \|x - x^{\prime} \| < r \}$.
Given a subset $S \subseteq V$ the notation $|S|$ is its cardinality, that is, the number of elements of $S$.

We will approximate $\Xfree$ with an increasingly dense sequence of discrete subsets of $\Xfree$.
That is, $\Xfree$ will be approximated by a finite set of configuration points selected randomly from $\Xfree$.
Each such discrete approximation of  $\Xfree$ will be encoded in a graph $\graph{G} = (V, E)$ with $V$ being the set of vertices (the elements of the discrete approximation of $\Xfree$) and with edge set $E \subseteq V \times V$ encoding allowable transitions  between elements of $V$.
Hence, $\graph{G}$ is a directed graph.
Transitions between two vertices $x$ and $x'$  in $V$ are enabled by a control action $u \in U(x)$
such that $x'$ is the successor vertex of $x$  in $\graph{G}$ under the action $u$ so that $(x,\succx) \in E$.
Let $U = \cup_{x \in V} U(x)$.
We use the mapping $ f : V \times U \rightarrow V$ given by
\begin{equation} \label{eq100}
\succx= f(x,u), \qquad u \in U(x),
\end{equation}
to formalize the transition from $x$ to $x'$ under the control action $u$.
In this case, we say that $\succx$ is the successor of $x$ and that $x$ is the predecessor of $\succx$.
The set of predecessors of $x\in V$ will be denoted by $\PrcPredecessor(\graph{G},x)$, and the  set of
successors of $x$ will be denoted by $\PrcSuccessor(\graph{G},x)$.
Also, we let  $\overline{\PrcPredecessor}(\graph{G},x) =  \PrcPredecessor(\graph{G},x) \cup \{x\}$.
Note that, using the previous definitions, the set of admissible control actions at $x$ may be equivalently defined as
\begin{equation}
U(x) = \{u:  \succx = f(x,u), ~ \succx \in \PrcSuccessor(\graph{G},x) \}.
\end{equation}
Thus, the control set $U(x)$ defines unambiguously the set $\PrcSuccessor(\graph{G},x)$ of the successors of $x$, in the sense that
there is one-to-one correspondence between
control actions $u\in U(x)$ and elements of $\PrcSuccessor(\graph{G},x)$ via (\ref{eq100}).
Equivalently, once the directed graph $\graph{G}$ is given, for each edge $(x,\succx) \in E$ corresponds a control $u\in U(x)$ enabling this transition.
It should be remarked that the latter statement, when dealing with dynamical systems (such as robots, etc) amounts to a controllability condition.
Controllability is always satisfied for fully actuated systems, but may not be satisfied for underactuated systems (such as for many case of kinodynamic planning with differential constraints).
For sampling-based methods such as $\AlgRRTstar{}$ this controllability condition is equivalent to the existence of a steering function that drives the system between any two given states.

Once we have abstracted $\Xfree$ using the graph $\graph{G}$, the motion planning problem becomes one of a shortest path problem on the graph
$\graph{G}$.
To this end, we define the path $\sigma$ 
in $\graph{G}$ to be a sequence of vertices $\sigma = (x_0,x_1,\ldots,x_N)$ such that $x_{k+1} \in \mathtt{succ}(\mathcal{G},x_k)$ for all $k = 0,1,\ldots,N-1$.
The length of the path is $N$, denoted by $\duration(\sigma) = N$.
When we want to specify explicitly the first node of the path we will use the first node as an argument, i.e., we will write $\sigma(x_0)$.
The $k$th element of $\sigma$ will be denoted by $\sigma_k$. That is,  if $\sigma(x_0) = (x_0,x_1,\ldots,x_N)$ then $\sigma_k(x_0) = x_k$ for all $k=0,1,\ldots,N$.
A path is rooted at $\xinit$ if $x_0 = \xinit$.
A path rooted at $\xinit$ terminates at a given goal region $\Xgoal \subset \Xfree$ if $x_N \in \Xgoal$.

To each edge $(x,\succx)$ 
encoding an allowable transition from $x \in \Xfree$ to $\succx \in \PrcSuccessor(\graph{G},x)$, we associate a finite cost $\costValue(x, \succx)$.
Given a path $\sigma(x_0)$, the cumulative cost along this path is then
\begin{equation}
\sum_{k=0}^{N-1} \costValue(x_{k}, x_{k+1}).
\end{equation}

Given a point $x \in \X$, a mapping $\mu: x \mapsto u \in U(x)$ that assigns a control action to be executed at each
point $x$ is called a \textit{policy}. 
Let $\mathcal{M}$ denote the space of all policies.
Under some assumptions on the connectivity of the graph $\graph{G}$ and the cost of the directed edges, one can use DP algorithms and the corresponding Bellman equation in order to compute optimal policies.
%These assumptions assert if the shortest path problem is well posed, and  yield existence of proper policies or improper policies with finite cost.
Note that a policy $\mu \in \mathcal{M}$ for this problem defines a graph whose edges are $(x,f(x,\mu(x))) \in E$ for all $x \in V$. The policy $\mu$ is proper if and only if this graph is acyclic, i.e., the graph has no cycles.
Thus, there exists a proper policy $\mu$ if and only if each node is connected to the $\Xgoal$ with a directed path.
Furthermore, an improper policy has finite cost, starting from every initial state, if and only if all the cycles of the corresponding graph have non-negative cost~\cite{Bertsekas2000}.
Convergence of the DP algorithms is proven if the graph is connected and the costs of all its cycles are positive~\cite{bertsekas2013abstract}.

\section{Overview of Dynamic Programming}

Dynamic programming solves sequential decision-making problems having a finite number of stages.
In terms of DP notation, our system has the following equation
\begin{equation} \label{eq:pt200}
 \succx = f(x,u)
\end{equation}
where the cost function is defined as
\begin{equation} \label{eq:pt201}
g(x,u) = \costValue(x, f(x,u)).
\end{equation}
Given a sequential decision problem of the form (\ref{eq:pt200})-(\ref{eq:pt201}),
it is well known that the optimal cost function satisfying the following \textit{Bellman equation}:
\begin{equation} \label{eqn:deterministic_system:bellmans_equation}
J^{*} (x) = \inf_{u \in U(x)} \Bigl\{ g(x,u) +  J^{*} (f(x,u)) \Bigr\}, \quad \forall x \in \X.
\end{equation}
The result of the previous optimization results in an optimal policy
$\mu^{*} \in \mathcal{M}$, that is,
\begin{equation} \label{eqn:deterministic_system:optimal_policy}
\mu^{*} (x) \in \argmin_{ u \in U(x)} \Bigl\{ g(x,u) +  J^{*} (f(x,u))\Bigr\}, \quad \forall x \in \X.
\end{equation}
Note that if we are given a policy $\mu \in \mathcal{M}$ (not necessarily optimal) we can compute its cost from
\begin{equation} \label{eqn:deterministic_system:bellmans_equation_mu}
J_\mu (x) =  g(x,\mu(x)) +  J_{\mu} (f(x,\mu(x))), \quad \forall x \in \X.
\end{equation}
It follows that $J^{*}(x) = \inf_{\mu \in \mathcal{M}} J_{\mu}(x), \quad x \in \X$.
By introducing  the  expression
\begin{equation} \label{def:Hfun}
H(x,u,J) = g(x,u) +  J( f(x,u)), \quad x \in \X, u \in U(x).
\end{equation}
and letting the operator $T_{\mu}$ for a given policy $\mu \in \mathcal{M}$,
\begin{equation}
(T_{\mu} J) (x) = H (x, \mu(x), J), \quad x \in \X,
\end{equation}
we can define the Bellman operator $T$
\begin{equation}
(TJ)(x) = \inf_{ u \in U(x)} H(x,u,J) = \inf_{\mu \in \mathcal{M}} (T_{\mu} J) (x), \quad x \in \X,
\end{equation}
which allows us to write the Bellman equation (\ref{eqn:deterministic_system:bellmans_equation}) succinctly as follows
\begin{equation}
J^{*} = TJ^{*},
\end{equation}
and the optimality condition (\ref{eqn:deterministic_system:optimal_policy}) as
\begin{equation}
T_{\mu^{*}} J^{*} = T J^{*}.
\end{equation}
This interpretation of the Bellman equation states that $J^{*}$ is the fixed point of the Bellman operator $T$, viewed as a mapping from the set of real-valued functions on $\X$ into itself. Also, in a similar way, $J_{\mu}$, the cost function of the policy $\mu$, is a fixed point of $T_{\mu}$ (see (\ref{eqn:deterministic_system:bellmans_equation_mu})).

There are three different classes of DP algorithms to compute the optimal policy $\mu^{*}$ and the optimal cost function~$J^{*}$.
%\begin{enumerate}
%\item Value Iteration
%\item Policy Iteration
%\item Optimistic Policy Iteration
%\end{enumerate}

\paragraph{Value Iteration (VI).}

This algorithm computes $J^{*}$ by relaxing Eq.~(\ref{eqn:deterministic_system:bellmans_equation}), starting with some $J^{0}$, and generating a sequence $\bigl\{T^{k}J \bigr\}_{k=0}^\infty$ using the iteration
\begin{equation}
J^{k+1} = T J^{k}
\end{equation}
The generated sequence converges to the optimal cost function due to contraction property of the Bellman operator $T$~\cite{bertsekas2013abstract}. 
This method is an indirect way of computing the optimal policy $\mu^{*}$, using the information of the optimal cost function~$J^{*}$.

\paragraph{Policy Iteration (PI).}

This algorithm starts with an initial policy  $\mu^{0}$ and generates a sequence of  policies $\mu^{k}$ by performing Bellman updates. Given the current policy $\mu^{k}$, the typical iteration is performed in two steps:

\begin{enumerate}[i)]
\item \textbf{Policy evaluation}: compute $\Jmuk$ as the unique solution of the equation
\begin{equation} \label{eq:PIdef}
\Jmuk = \Tmuk \Jmuk.
\end{equation}

\item \textbf{Policy improvement}: compute a policy $\mu^{k+1}$ that satisfies
\begin{equation}
T_{\mu^{k+1}} \Jmuk = T \Jmuk.
\end{equation}
\end{enumerate}

\paragraph{Optimistic Policy Iteration (O-PI).}

This algorithm works the same as PI, but differs in the policy evaluation step. 
Instead of solving the system of linear equations exactly in the policy evaluation step (\ref{eq:PIdef}), it performs an approximate evaluation of the current policy and uses this information in the subsequent policy improvement step.

\begin{comment}
\begin{description}
\item[Optimal Motion Planning]\label{problem:omp} Given a bounded and connected open set $\X \subset \reals^{{d}}$, the sets $\Xfree$ and $\Xobs = \X \backslash \Xfree$, and  an initial point $\xinit \in \Xfree$ and a goal region $\Xgoal \subset \Xfree$, find the minimum-cost path connecting $\xinit$ to the goal region $\Xgoal$. %If no such path exists, then report that no solution is possible.

\item[Optimal Feedback Motion Planning]\label{problem:ofmp} Given a bounded and connected open set $\X \subset \reals^{{d}}$, the sets $\Xfree$ and $\Xobs = \X \backslash \Xfree$, and a goal region $\Xgoal \subset \Xfree$, find the minimum-cost path connecting any $x \in \Xfree$ to the goal region $\Xgoal$.
\end{description}
\end{comment}

\section{Random Geometric Graphs}\label{section:random_geometric_graphs}

The main difference between standard shortest path problems on graphs and sampling-based methods for solving motion planning problems is the fact that in the former case the graph is given a priori, whereas in the latter case the path is constructed on-the-fly by sampling randomly allowable configuration points from $\Xfree$ and by constructing the graph $\graph{G}$ incrementally, adding one, or more, vertices at each iteration step. 
Of course, such an iterative construction raises several questions, such as: is the resulting graph connected? under what conditions one can expect that $\graph{G}$ is an accurate representation of $\Xfree$? 
how does discretizing the actions/control inputs affects the movement between sampled successor vertices, etc. 
All these questions have been addressed in a series of recent papers~\cite{karaman2010optimal,karaman2011sampling} so we will not elaborate further on the graph construction.
Suffice it to say, such random geometric graphs (RGGs) can be constructed easily and such graphs have been the cornerstone of the recent emergence of asymptotically optimal sampling based motion planners.

For completeness, and in order to establish the necessary connections between DP algorithms and RRGs, we provide a brief overview of random graphs as they are used in this work.
For more details, the interested reader can peruse~\cite{Bertsekas2000} or \cite{penrose2003random}.

In graph theory, a random geometric graph (RGG) is a mathematical object that is usually used to represent spatial networks. RGGs are constructed by placing a collection of vertices drawn randomly according to a specified probability distribution. These random points constitute the node set of the graph in some topological space. Its edge set is formed via pairwise connections between these nodes if certain conditions (e.g., if their distance according to some metric is in a given range) are satisfied. Different probability distributions and connection criteria yield random graphs of different properties.

An important class of random geometric graphs is the \textit{random r-disc graphs}. Given the number of points $n$ and a nonnegative radius value $r$, a random $r$-disc graph in $\reals^{d}$ is constructed as follows: first, $n$ points are independently drawn from a uniform distribution. These points are pairwise connected if and only if the distance between them is less than $r$. Depending on the radius, this simple model of random geometric graphs possesses different properties as the number of nodes $n$ increases.
A natural question to ask is how the connectivity of the graph changes for different values of the connection radius as the number of samples goes to infinity. In the literature, it is shown that the connectivity of the random graph exhibits a phase transition, and a connected random geometric graph is constructed almost surely when the connection radius $r$ is strictly greater than a critical value $r^{*} = \big\{ \log(n)/ (n \zeta_{d}) \big\}^{d}$, where $\zeta_{d}$ is volume of the unit ball in $\reals^{d}$.
If the connection radius is chosen less than the critical value $r^{*}$, then, multiple disconnected clusters occur almost surely as $n$ goes to infinity~\cite{penrose2003random}.

Recently, novel connections have been made between motion planning algorithms and the theory of random geometric graphs~\cite{karaman2011sampling}. These key insights have led to the development of a new class of algorithms which are asymptotically optimal (e.g., \AlgRRG, \AlgRRTstar, \AlgPRMstar).
For example, in the \AlgRRG{} algorithm, a random geometric $r$-disc graph is first constructed incrementally for a fixed number of iterations. 
Then, a post-search is performed on this graph to extract the encoded solution. The key step is that the connection radius is shrunk as a function of vertices, while still being strictly greater than the critical radius value.
By doing so, it is guaranteed to obtain a connected and sparse graph, yet the graph is rich enough to provide asymptotic optimality guarantees, almost surely. 
The authors in \cite{karaman2011sampling} showed that the \AlgRRG{} algorithm yields a consisted discretization of the underlying continuous configuration space, i.e., as the number of points goes to infinity, the lowest-cost solution encoded in the random geometric graph converges to the optimal solution embedded in the continuous configuration space with probability one. In this work, we leverage this nice feature of random geometric graphs to get a consistent discretization of the continuous domain of the robot motion planning problem. With the help of random geometric graphs, the robot motion planning problem boils down to a shortest path problem on a discrete graph.

\section{Proposed Approach}

\subsection{From RRGs to DP}

Let $\graph{G} = (V, E)$ denote the graph constructed by the \AlgRRG{} algorithm at some iteration, where $V$ and $E \subseteq V \times V$ are finite sets of vertices and edges, respectively.
Based on the previous discussion, $\graph{G}$ is connected and all edge costs are positive, which implies that the cost of all the cycles in $\graph{G}$ are positive.
Using the notation introduced in Section~\ref{section:formulation} , we can define on this graph the sequential decision system (\ref{eq:pt200}) where  $\succx \in \PrcSuccessor(\graph{G},x)$ and with transition cost as in (\ref{eq:pt201}).
Once a policy $\mu$ is given (optimal or not), there is a unique $\succx \in \PrcSuccessor(\graph{G},x)$ such that $\succx = f(x,\mu(x))$, called the parent of $x$.
Accordingly, $x$ is the child of $\succx$ under the policy $\mu$. 
Conversely, a parent assignment for each node in $\graph{G}$ defines a policy.
Note that each node has a single parent under a given policy, but may have multiple children.

In our case, the graph computed by the \AlgRRG{} algorithm is a connected graph by construction, and all edge cost values are positive, which implies that the costs of all its cycles are positive. Therefore, convergence is guaranteed and the resulting optimal policy is proper.

\subsection{DP Algorithms for Sampling-based Planners}

The sampling-based motion planner which utilizes VI, i.e., \AlgRRTsharp{}, was presented in~\cite{arslan2013useofrelaxation}. The \AlgRRTsharp{} algorithm implements the Gauss-Seidel version of the VI algorithm and provides a sequential implementation. In this work, we follow up on the same idea and propose a sampling-based algorithm which utilizes PI algorithm as shown in Figure~\ref{fig:rrtsharp_pi_flow_diagram}.

\begin{figure}
% l b r t
% {
% \setlength{\fboxsep}{0pt}%
% \setlength{\fboxrule}{1pt}%
% \fbox{}
% }
\centering
\scalebox{0.4}{\includegraphics[]{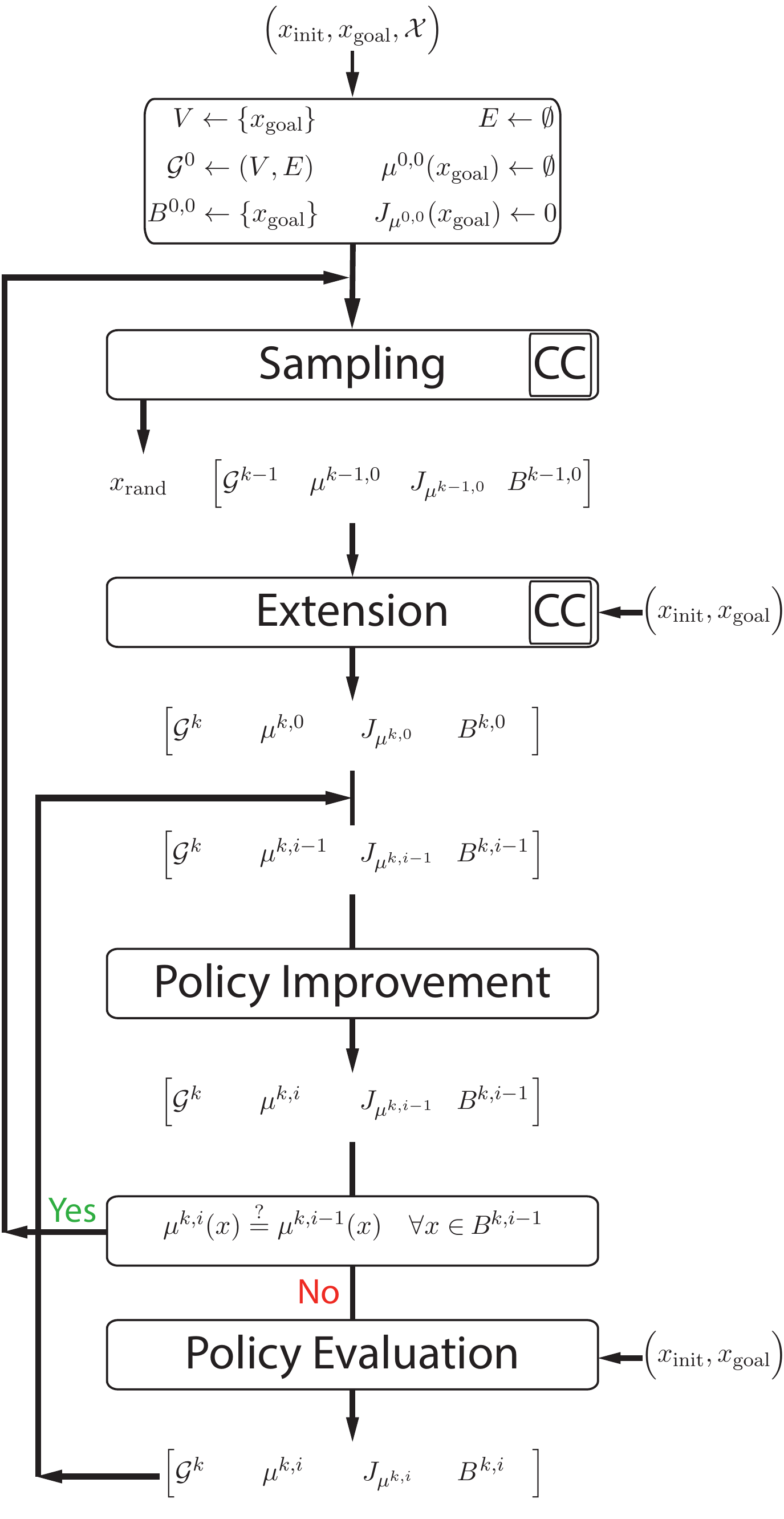}}
\caption{Overview of the PI-\AlgRRTsharp{} Algorithm}\label{fig:rrtsharp_pi_flow_diagram}

\end{figure}

The body of PI-\AlgRRTsharp{} algorithm is given in Algorithm~\ref{alg:rrtsharp_omp_pi}. The algorithm initializes the graph with $\xgoal$ in Line 2 and incrementally builds the graph from $\xgoal$  toward $\xinit$. The algorithm includes a new vertex and a couple new edges into the existing graph at each iteration. If this new information has a potential to improve the existing policy, then, a slightly modified PI algorithm is called subsequently in the $\PrcReplan$ procedure. 
Specifically, and for the sake of numerical efficiency, unlike the standard PI algorithm, policy improvement is performed only for a subset vertices $B$ which have the potential to be part of the optimal solution. As shown in Figure~\ref{fig:rrtsharp_pi_flow_diagram}, $B^{k,i}$ is the set of these vertices during the $k$th iteration and $i$th policy improvement step. The fact that this modification of the PI still ensures the asymptotic optimality, almost surely, of the proposed PI-\AlgRRTsharp{} algorithm requires extra analysis, 
which is presented in Section~\ref{sec:analysis}.

\IncMargin{1mm}

\begin{algorithm}[h]

    % \tiny
	% \scriptsize
	% \footnotesize
 	
 	%\tiny
 	\small
    \DontPrintSemicolon
    \SetKwInOut{Input}{input}
    \SetKwInOut{Output}{output}
    \SetKwBlock{NoBegin}{}{end}

    % Begin : Declaration of Functions for RRTsharp

    %\SetFuncSty{ptm}    % Adobe Times
    \SetKwFunction{fRRTsharp}{\ensuremath{\mathbf{PI}}-\ensuremath{\mathbf{RRT^{\#}}}}
    \SetKwFunction{fSample}{Sample}
    \SetKwFunction{fExtend}{Extend}
    \SetKwFunction{fReduceInconsistency}{Replan}
    \SetKwFunction{fParent}{parent}
    %\SetKwFunction{}{}

    % End : Declaration of Functions for RRTsharp

    % Begin : Declaration of Data Section for RRTsharp
    %\SetKwData{vCVProm}{$V_{\mathrm{p}}$}
    \SetKwData{vCVProm}{$B$}
    \SetKwData{vV}{$V$}
    \SetKwData{vE}{$E$}
    \SetKwData{vEPrime}{$E^{\prime}$}
    \SetKwData{vXInit}{$x_{\mathrm{init}}$}
    \SetKwData{vI}{$k$}
    \SetKwData{vXRand}{$x_{\mathrm{rand}}$}
    \SetKwData{vT}{$\mathcal{T}$}
    \SetKwData{vG}{$\mathcal{G}$}
    \SetKwData{vCXGoal}{$x_{\mathrm{goal}}$}
    %\SetKwData{vCXGoal}{$\mathcal{X}_{\mathrm{goal}}$}
    \SetKwData{vCX}{$\mathcal{X}$}
    \SetKwData{vX}{$x$}

%   \SetKwData{}{}

    % End : Declaration of Data Section for RRTsharp

    \SetFuncSty{textbf}
    %\vT $\leftarrow$ \fRRTsharp{\vXInit, \vCXGoal, \vCX}
    
    \fRRTsharp{\vXInit, \vCXGoal, \vCX}
    \SetFuncSty{texttt}
    \NoBegin
    {
        $\vV \leftarrow \{\vCXGoal\}$;
        %$\vV \leftarrow \{\vXInit\}$;
        $\vCVProm \leftarrow \vV$;
        $\vE \leftarrow \emptyset$;

        %$\vT \leftarrow (\vV,\vE)$;
        $\vG \leftarrow (\vV,\vE)$;
        %$\vI \leftarrow 0$;

        \For{$\vI = 1$ to $N$ \label{line:rrtsharp_itbegin}}
        {
            $\vXRand = \fSample(\vI)$;

            $(\vG, \vCVProm^{\prime}) \leftarrow \fExtend(\vG, \vCVProm, \vXInit, \vXRand)$;

			\tikzmark{a0}\If{$|\vCVProm^{\prime}| > |\vCVProm|$}
			{
				$\vCVProm \leftarrow \fReduceInconsistency(\vG,\vCVProm^{\prime}, \vXInit, \vCXGoal)$;\tikzmark{a1}\label{line:rrtsharp_itend}
			}

        }

        $(\vV,\vE) \leftarrow \vG$;
        $\vEPrime \leftarrow \emptyset$;

        \ForEach{$\vX \in \vV$}
        {
            $\vEPrime \leftarrow \vEPrime \cup \{(\vX, \fParent(\vX))\}$;
        }

        \Return{$\vT = (\vV,\vEPrime)$};
    }

\caption{{\small Body of the \AlgRRTsharp{} Algorithm (PI)}} \label{alg:rrtsharp_omp_pi}

\end{algorithm}
%\DecMargin{1em}

The $\PrcExtend$ procedure is given in Algorithm~\ref{alg:extend_rrtsharp_omp_pi}. If a new vertex is decided for inclusion, its control is initialized by performing policy improvement in Lines 6-14. Then, it is checked in Line 15 if the new vertex has a potential to improve the existing policy where $h$ denotes an admissible heuristic function that computes an estimate of the cost between two given points. 
If so, it is included to the set of vertices which are selected to perform policy improvement.
Such heuristics, have been previously used to focus the search of sampling-based planner, see for example~\cite{otteieeetro13,arslan2013useofrelaxation}.

\IncMargin{1mm}

\begin{algorithm}[h]
    % \tiny
    %\scriptsize
	\footnotesize
	%\small
    %\scriptsize
    %\SetAlFnt{\small\sf}

    \DontPrintSemicolon
    \SetKwInOut{Input}{input}
    \SetKwInOut{Output}{output}
    \SetKwBlock{NoBegin}{}{end}

    % Begin : Declaration of Functions for Extend

    %\SetFuncSty{ptm}    % Adobe Times
    %\SetKwFunction{fExtend}{${\tt Extend_\AlgRRTsharp}$}  
    \SetKwFunction{fUpdateList}{UpdateList}
	\SetKwFunction{fExtend}{Extend}      
    \SetKwFunction{fNearest}{Nearest}
    \SetKwFunction{fSteer}{Steer}
    \SetKwFunction{fObstacleFree}{ObstacleFree}
    \SetKwFunction{fUpdateState}{UpdateQueue}
    \SetKwFunction{fNear}{Near}
    \SetKwFunction{fG}{g}
    \SetKwFunction{fC}{c}
    \SetKwFunction{fH}{h}
    \SetKwFunction{fLMC}{lmc}
    \SetKwFunction{fParent}{parent}
    \SetKwFunction{fInitState}{Initialize}
    \SetKwFunction{fCJ}{J}
    \SetKwFunction{fPred}{pred}
    %\SetKwFunction{}{}

    % End : Declaration of Functions for Extend

    % Begin : Declaration of Data Section for Extend
    
    \SetKwData{vCVProm}{$V_{\mathrm{p}}$}
    \SetKwData{vCVProm}{$B$}
    \SetKwData{vCJ}{$J$}
    \SetKwData{vCJmin}{$J_{\min}$}
    
    \SetKwData{vXInit}{$x_{\mathrm{init}}$}    
        
    \SetKwData{vT}{$\mathcal{T}$}
    \SetKwData{vG}{$\mathcal{G}$}
    \SetKwData{vTPrime}{$\mathcal{T}^{\prime}$}
    \SetKwData{vGPrime}{$\mathcal{G}^{\prime}$}
    \SetKwData{vX}{$x$}
    \SetKwData{vXRand}{$x_{\mathrm{rand}}$}
    \SetKwData{vV}{$V$}
    \SetKwData{vE}{$E$}
    
    %\SetKwData{vVPrime}{$V^{\prime}$}
    \SetKwData{vEPrime}{$E^{\prime}$}
    \SetKwData{vXNearest}{$x_{\mathrm{nearest}}$}
    \SetKwData{vXNew}{$x_{\mathrm{new}}$}
    \SetKwData{vCXNear}{$\mathcal{X}_{\mathrm{near}}$}
    \SetKwData{vXNear}{$x_{\mathrm{near}}$}

%    \SetKwData{}{}

    % End : Declaration of Data Section for Extend
	%\Indentp{-0.5em}
    \SetFuncSty{textbf}
    %\vTPrime $\leftarrow$ \fExtend{\vT,\vX}
    \fExtend{\vG, \vCVProm, \vXInit, \vXRand}
    \SetFuncSty{texttt}
    \NoBegin
    {
%        $\vVPrime \leftarrow \vV$;
%        $\vEPrime \leftarrow \vE$;
		
		\tikzmark{p0st}$(\vV,\vE) \leftarrow \vG$;
		$\vEPrime \leftarrow \emptyset$;
        
        $\vXNearest = \fNearest(\vG,\vXRand)$;
         
        $\vXNew = \fSteer(\vXRand, \vXNearest)$;\tikzmark{p0en}
        
        \If{$\fObstacleFree(\vXNew, \vXNearest)$}
        {
            %$\fG(\vXNew) \leftarrow \infty$;
            %$\fLMC(\vXNew) \leftarrow \infty$;
            %\fInitState{\vXNew, \vXNearest};
            
            $\fCJ(\vXNew) = \fC(\vXNew, \vXNearest) + \fCJ(\vXNearest)$; 
            
            $\fParent(\vXNew) = \vXNearest$; 
            
            %\tikzmark{p1st}$\fG(\vXNew) \leftarrow \infty$;
			
			%$\fLMC(\vXNew) = \fG(\vXNearest) + \fC(\vXNearest, \vXNew)$;
						
			%$\fParent(\vXNew) = \vXNearest$;\tikzmark{p1en}
			          
          	\tikzmark{p2st}$\vCXNear \leftarrow \fNear(\vG,\vXNew,|\vV|)$;\tikzmark{p2en}
          	
          	%$\vCJmin(\vXNew) = \vCJ(\vXNew)$;
			
            \tikzmark{p3st}\ForEach{$\vXNear \in \vCXNear$}
            {
                \tikzmark{p4st}\If{$\fObstacleFree(\vXNew, \vXNear)$}
                {
                	
                    \If{ $\fCJ(\vXNew) > \fC(\vXNew, \vXNear) + \fCJ(\vXNear)$}
                    {
						$\fCJ(\vXNew) = \fC(\vXNew, \vXNear) + \fCJ(\vXNear)$;                        

                        $\fParent(\vXNew) = \vXNear$;
                    }

                    $\vEPrime \leftarrow \vEPrime \cup \{ (\vXNew, \vXNear), (\vXNear, \vXNew) \}$;\tikzmark{p3en}
                }
            }
			
			\tikzmark{p5st}\If{$\fH(\vXInit, \fParent(\vXNew))+ \fCJ(\fParent(\vXNew)) < \fCJ(\vXInit) $}
	        {
				$\vCVProm \leftarrow \vCVProm \cup \{\vXNew\}$;
	        }
			            
			%$\vV \leftarrow \vV \cup \{ \vXNew \} \cup \fPred(\vG, \vXNew)$;

			$\vV \leftarrow \vV \cup \{ \vXNew \}$;

            $\vE \leftarrow \vE \cup \vEPrime$;\tikzmark{p5en}

            %$\vE \leftarrow \vE \cup \{ (\fParent(\vXNew), \vXNew) \}$;

            %$\fUpdateList(\vXNew)$;\tikzmark{p5en}

        }
	
		%$\vGPrime \leftarrow (\vV,\vE)$;
	
        %\Return{$(\vGPrime, \vCVProm)$};
    	$\vG \leftarrow (\vV,\vE)$;
      	
        \Return{$(\vG, \vCVProm)$};
    }

%    \SetFuncSty{textbf}
%    %\vTPrime $\leftarrow$ \fExtend{\vT,\vX}
%    \fInitState{\vX}
%    \SetFuncSty{texttt}
%    \NoBegin
%    {
%        $\fG(\vX) \leftarrow \infty$;
%
%        $\fLMC(\vX) \leftarrow \infty$;
%
%        $\fParent(\vX) \leftarrow \varnothing$;
%    }
%\caption{${\tt Extend}$ Procedure} \label{alg:extend_rrtsharp}
\caption{{\small ${\tt Extend}$ Procedure for \AlgRRTsharp{} (PI)}}\label{alg:extend_rrtsharp_omp_pi} 
\end{algorithm}
%\DecMargin{1em} 

The $\PrcReplan$ procedure which implements the PI algorithm is shown in Algorithm~\ref{alg:replan_omp_pi}. 
The policy improvement step is performed in Lines 2-9 until the cost of the existing policy
becomes almost stationary. 
Note that the for-loop in the $\PrcReplan$ procedure can run in parallel.

\IncMargin{1mm}
\begin{algorithm}[h]
    %\tiny
    \small
    \DontPrintSemicolon
    \SetKwInOut{Input}{input}
    \SetKwInOut{Output}{output}
    \SetKwBlock{NoBegin}{}{end}
	\SetKwFor{Loop}{Loop}{}{EndLoop}

    % Begin : Declaration of Functions for ReduceInconsistency

    \SetFuncSty{ptm}    % Adobe Times

    \SetKwFunction{fReplan}{Replan}
    \SetKwFunction{fKey}{Key}
    \SetKwFunction{fLMC}{lmc}

    \SetKwFunction{fG}{g}
    \SetKwFunction{fSucc}{succ}
    \SetKwFunction{fC}{c}
    \SetKwFunction{fCJ}{J}
    \SetKwFunction{fParent}{parent}
    \SetKwFunction{fUpdateState}{UpdateQueue}
    
    \SetKwFunction{fEvaluate}{Evaluate}
	\SetKwFunction{fImprove}{Improve}
	
    %\SetKwFunction{}{}

    % End : Declaration of Function for ReduceInconsistency

    % Begin : Declaration of Data Section for ReduceInconsistency
    %\SetKwData{vVPrime}{$V^{\prime}$}
    %\SetKwData{vEPrime}{$E^{\prime}$}
    
    \SetKw{Break}{break}
        
    \SetKwData{vCJ}{$J$}
    \SetKwData{vCJPrime}{$J^{\prime}$}
    \SetKwData{vCS}{$S$}
    
    %\SetKwData{vCVProm}{$V_{\mathrm{p}}$}    
    \SetKwData{vCVProm}{$B$} 
    \SetKwData{vV}{$V$}
    \SetKwData{vE}{$E$}
    \SetKwData{vFrontier}{$q$}
    \SetKwData{vXGoal}{$x_{\mathrm{goal}}$}
    \SetKwData{vT}{$\mathcal{T}$}
    \SetKwData{vG}{$\mathcal{G}$}
    \SetKwData{vTPrime}{$\mathcal{T}^{\prime}$}
    \SetKwData{vGPrime}{$\mathcal{G}^{\prime}$}
    %\SetKwData{vCXGoal}{$\mathcal{X}_{\mathrm{goal}}$}
    \SetKwData{vCXGoal}{$x_{\mathrm{goal}}$}
    \SetKwData{vXminGoal}{$v^{*}_{\mathrm{goal}}$}
    \SetKwData{vVminGoal}{$v^{*}_{\mathrm{goal}}$}
    \SetKwData{vX}{$x$}
    \SetKwData{vS}{$s$}
    \SetKwData{vV}{$v$}
	\SetKwData{vXInit}{$x_{\mathrm{init}}$}

%    \SetKwData{}{}
    % End : Declaration of Data Section for ReduceInconsistency

    \SetFuncSty{textbf}
    %\vT $\leftarrow$ \fReplan{\vT,\vCXGoal}
    \fReplan {\vG, \vCVProm, \vXInit, \vCXGoal} % ,\vCXGoal
    \SetFuncSty{texttt}
    \NoBegin
    {
        %$(\vV,\vE) \leftarrow \vG$;

        %\While{$\vFrontier.findmin() \prec \fKey(\vXGoal)$ or $\fLMC(\vXGoal) > \fG(\vXGoal)$}
        %\tikzmark{p6st}\While{$\vFrontier.findmin() \prec \fKey(\vXminGoal)$}

        \tikzmark{p6st} \Loop{}
        {
            %\tikzmark{p7st}$\vX = \vFrontier.findmin()$;

            %\If{$\fG(\vX) > \fLMC(\vX)$}
            %{
          	%$\fG(\vX) = \fLMC(\vX)$;

            %$\vFrontier.delete(\vX)$;\tikzmark{p7en}

			\ForEach{ $\vX \in \vCVProm$}
			{  
				$\vCJPrime =  \fCJ(\vX)$;
				
				\ForEach{$\vV \in \fSucc(\vG,\vX)$}
				{
					\If{ $\vCJPrime > \fC (\vX, \vV) + \fCJ(\vV)$}
					{
						$\vCJPrime = \fC (\vX, \vV)$ + $\fCJ(\vV)$;
					
						$\fParent(\vX) = \vV$;
					}			
				}
				
				$\Delta J (\vX)= \fCJ(\vX)-\vCJPrime$;
				%\If{$\Delta J < \fCJ(\vX)-\vCJPrime$}
				%{
				%	$\Delta J = \fCJ(\vX)-\vCJPrime$;	
				%}

			}
			
			%\If{$\Delta J \leq \epsilon$}
			\If{$\max_{\vX \in \vCVProm}{\Delta J(\vX)} \leq \epsilon$}
			{
				\Return{$\vCVProm$};
			}
			$\vCVProm \leftarrow \fEvaluate(\vG, \vXInit, \vCXGoal)$;
			
%            \tikzmark{p8st}\ForEach{$\vV \in \fSucc(\vG,\vX)$}
%            {
%              \If{$\fLMC(\vV) > \fG(\vX) + \fC(\vX,\vV)$}
%              {
%                    %$\vE \leftarrow \vE \setminus \{(\fParent(\vV),\vV)\}$;
%
%                    %$\vE \leftarrow \vE \cup \{(\vX,\vV)\}$;
%
%                  \tikzmark{p9st}$\fLMC(\vV) = \fG(\vX) + \fC(\vX,\vV)$; 
%
%                  $\fParent(\vV) = \vX$;
%
%                  $\fUpdateState(\vV)$;\tikzmark{p6en}
%               }
%             }
            %}

        }

        %\Return{$\vGPrime = (\vV,\vE)$}
    }

%    \SetFuncSty{textbf}
%    \fUpdateState{\vX}
%    \SetFuncSty{texttt}
%    \NoBegin
%    {
%         \If{$\fG(\vX) \neq \fLMC(\vX)$ and $\vX \in \vFrontier$}
%         {
%            $\vFrontier.update(\vX, \fKey(\vX))$;
%         }
%         \ElseIf{$\fG(\vX) \neq \fLMC(\vX)$ and $\vX \notin \vFrontier$}
%         {
%            $\vFrontier.insert(\vX, \fKey(\vX))$;
%         }
%         \ElseIf{$\fG(\vX) = \fLMC(\vX)$ and $\vX \in \vFrontier$}
%         {
%            $\vFrontier.delete(\vX)$;
%         }
%    }
%
%    \SetKwFunction{fH}{h}
%    %\SetKwFunction{}{}
%
%
%    \SetKwData{vGPrime}{$g^{\prime}$}
%    \SetKwData{vGMin}{$g_{\min}$}
%    \SetKwData{vF}{$f$}
%    \SetKwData{vKey}{$key$}
%
%    \SetFuncSty{textbf}
%    \fKey{\vV}
%    \SetFuncSty{texttt}
%    \NoBegin
%    {
%        $\vGMin = \min(\fG(\vV),\fLMC(\vV))$;
%
%        $\vF = \vGMin + \fH(\vV)$;
%
%        \Return{$\vKey = (\vF,\vGMin)$};
%    }

\caption{ {\small ${\tt Replan}$ Procedure (PI)} {\color{white} $^{\#}$}} \label{alg:replan_omp_pi}
\end{algorithm}
%\DecMargin{1em} 

The policy evaluation step is implemented in Algorithm~\ref{alg:omp_policy_evaluation}. Algorithm~\ref{alg:omp_policy_evaluation} solves a system of linear equations by exploiting the underlying structure. Simply, the existing policy forms a tree in the current graph and the solution of the system of linear equations corresponds to the cost of each path connecting vertices to the goal region via edges of the tree. 
If $\xinit$ is already in the graph, the algorithm computes the cost of the path between $\xinit$ and $\xgoal$ by using queue $q$.  
Subsequently, the set of vertices that are promising, i.e., those in the set $B$ and their cost-to-go values are computed by using the cost-to-go value of $\xinit$.

\IncMargin{1mm}
\begin{algorithm}[h]
    %\tiny
    \small
    \DontPrintSemicolon
    \SetKwInOut{Input}{input}
    \SetKwInOut{Output}{output}
    \SetKwBlock{NoBegin}{}{end}

    % Begin : Declaration of Functions for ReduceInconsistency

    \SetFuncSty{ptm}    % Adobe Times
    \SetKwFunction{fReplan}{Replan}
    \SetKwFunction{fKey}{Key}
    \SetKwFunction{fLMC}{lmc}

 	\SetKwFunction{fChildren}{children}
    \SetKwFunction{fG}{g}
    \SetKwFunction{fSucc}{succ}
    \SetKwFunction{fC}{c}
    \SetKwFunction{fH}{h}
    \SetKwFunction{fCJ}{J}
    \SetKwFunction{fParent}{parent}
    \SetKwFunction{fUpdateState}{UpdateQueue}
    
    \SetKwFunction{fEvaluate}{Evaluate}
	\SetKwFunction{fImprove}{Improve}
	
    %\SetKwFunction{}{}

    % End : Declaration of Function for ReduceInconsistency

    % Begin : Declaration of Data Section for ReduceInconsistency
    %\SetKwData{vVPrime}{$V^{\prime}$}
    %\SetKwData{vEPrime}{$E^{\prime}$}

    %\SetKwData{vCVProm}{$V_{\mathrm{p}}$} 
    \SetKwData{vCVProm}{$B$} 
    \SetKwData{vCJ}{$J$}
    \SetKwData{vCJmin}{$J_{\min}$}
    \SetKwData{vXInit}{$x_{\mathrm{init}}$}
    \SetKwData{vListL}{$q$}
    \SetKwData{vListS}{$S$}
        
    \SetKwData{vV}{$V$}
    \SetKwData{vE}{$E$}
    \SetKwData{vFrontier}{$q$}
    \SetKwData{vXGoal}{$x_{\mathrm{goal}}$}
    \SetKwData{vT}{$\mathcal{T}$}
    \SetKwData{vG}{$\mathcal{G}$}
    \SetKwData{vTPrime}{$\mathcal{T}^{\prime}$}
    \SetKwData{vGPrime}{$\mathcal{G}^{\prime}$}
    %\SetKwData{vCXGoal}{$\mathcal{X}_{\mathrm{goal}}$}
    \SetKwData{vCXGoal}{$x_{\mathrm{goal}}$}
    \SetKwData{vXminGoal}{$v^{*}_{\mathrm{goal}}$}
    \SetKwData{vVminGoal}{$v^{*}_{\mathrm{goal}}$}
    \SetKwData{vX}{$x$}
    \SetKwData{vS}{$v$}

%    \SetKwData{}{}
    % End : Declaration of Data Section for ReduceInconsistency

    \SetFuncSty{textbf}
    %\vT $\leftarrow$ \fReplan{\vT,\vCXGoal}
    \fEvaluate{\vG, \vXInit, \vCXGoal}
    \SetFuncSty{texttt}
    \NoBegin
    {
        %$(\vV,\vE) \leftarrow \vG$;

        %\While{$\vFrontier.findmin() \prec \fKey(\vXGoal)$ or $\fLMC(\vXGoal) > \fG(\vXGoal)$}
        %\tikzmark{p6st}\While{$\vFrontier.findmin() \prec \fKey(\vXminGoal)$}
        $(\vV,\vE) \leftarrow \vG$;
        
        \If{$\vXInit \in V$}
        {
	        $\vX = \vXInit$;
	        
	        \While{ $\vX \neq \vXGoal$ }
	        {
	        	$\vListL.push\_front(x)$;
	        	
	        	$x = \fParent(x)$;
	        }
	        
	        %$\vX = \vXGoal$;
	        
	        $\fCJ(\vXGoal) = 0$;
	        
	        %$\vX = \vListLpop\_front()$;
	        
	        \While{$\vListL.empty()$}
	        {
	        	$\vX = \vListL.pop\_front()$;
	        	
	        	$\fCJ(\vX) = \fC(\vX, \fParent(\vX)) + \fCJ(\fParent(\vX))$;
	        }
        }
        \Else
        {
	        $\fCJ(\vXInit) = \infty$;
        }
        
        $\vCVProm \leftarrow \{\vXGoal\}$;
        
        $\vListL.push\_back(\vXGoal)$;
        
        %\While{$\neg \vListL.empty()$}
        \While{$\vListL.nonempty()$}
        {
        	$\vX = \vListL.pop\_front()$;
        	
        	\If{ $ \fH(\vXInit, \vX)+ \fCJ(\vX) < \fCJ(\vXInit)$ }
        	{
	        	\ForEach{$\vS \in \fPred(\vG,\vX)$}
				{
					\If{ $ \fParent(\vS) = \vX$ }
					{
						$\fCJ(\vS) = \fC (\vS, \vX)$ + $\fCJ(\vX)$;
					}

					$\vCVProm \leftarrow \vCVProm \cup \{\vS\}$;
					
					$\vListL.push\_back(\vS)$;
					
					%$\vCVProm.push\_back(\vS)$;
								
				}
        	}
        }

        \Return{$\vCVProm$};
    }

%    \SetFuncSty{textbf}
%    \fUpdateState{\vX}
%    \SetFuncSty{texttt}
%    \NoBegin
%    {
%         \If{$\fG(\vX) \neq \fLMC(\vX)$ and $\vX \in \vFrontier$}
%         {
%            $\vFrontier.update(\vX, \fKey(\vX))$;
%         }
%         \ElseIf{$\fG(\vX) \neq \fLMC(\vX)$ and $\vX \notin \vFrontier$}
%         {
%            $\vFrontier.insert(\vX, \fKey(\vX))$;
%         }
%         \ElseIf{$\fG(\vX) = \fLMC(\vX)$ and $\vX \in \vFrontier$}
%         {
%            $\vFrontier.delete(\vX)$;
%         }
%    }
%
%    \SetKwFunction{fH}{h}
%    %\SetKwFunction{}{}
%
%
%    \SetKwData{vGPrime}{$g^{\prime}$}
%    \SetKwData{vGMin}{$g_{\min}$}
%    \SetKwData{vF}{$f$}
%    \SetKwData{vKey}{$key$}
%
%    \SetFuncSty{textbf}
%    \fKey{\vS}
%    \SetFuncSty{texttt}
%    \NoBegin
%    {
%        $\vGMin = \min(\fG(\vS),\fLMC(\vS))$;
%
%        $\vF = \vGMin + \fH(\vS)$;
%
%        \Return{$\vKey = (\vF,\vGMin)$};
%    }

\caption{ {\small ${\tt Evaluate}$ Procedure (PI)} {\color{white} $^{\#}$}} \label{alg:omp_policy_evaluation}
\end{algorithm}
%\DecMargin{1em} 

\FloatBarrier
%\newpage
\section{Theoretical Analysis} \label{sec:analysis}

The main purpose of this section is to show that the proposed PI-\AlgRRTsharp{} algorithm inherits the nice properties of the \AlgRRG{} and \AlgRRTstar{} algorithms and thus it is asymptotically optimal, almost surely.
The result follows trivially if policy iteration is performed on all the vertices of the current graph $\graph{G}^{k}$ at the $k$th iteration of the PI-\AlgRRTsharp{} algorithm. 
However, this will involve performing policy iteration also on vertices that may not have the potential to be part of the optimal solution.
Ideally, and for the sake of numerical efficiency, we wish to preform policy improvement only on those vertices that have the potential of being part of the optimal solution, and only those.
This will require a more detailed analysis, since we only have an estimate of this set of (so-called promising) vertices.
The basic idea of the proof is based on the fact that each policy improvement progresses sequentially and computes best paths of length one, then of length two, then of length three, and so on. 
This allows us to keep track of all promising vertices that can be part of the optimal path of increasing lengths (e.g., increasing number of path edges) as we start from the goal vertex and move back towards the start vertex.
The proof is rather long and  thus is is split in a sequence of several lemmas. 

To this end, let $\graph{G}^{k} = (V^{k},E^{k})$ denote the graph at the end of the $k$th iteration of the PI-\AlgRRTsharp{} algorithm. Given a vertex $x \in V^{k}$, let the control set $U^{k}(x)$ be divided into two disjoint sets $U^{k,*}(x)$ and $U^{k,\prime}(x)$ and  and let the successor set $\PrcSuccessor(\graph{G}^{k},x)$ also  be divided accordingly into two disjoint sets $S^{k,*}(x)$ and $S^{k,\prime}(x)$, as follows:
\begin{itemize}
\item $U^{k}(x) = U^{k,*}(x) \cup U^{k,\prime}(x)$ where $U^{k,*}(x) = \arg \min\limits_{u \in U^{k}(x)} H(x,u,J^{k,*})$ and $U^{k,\prime}(x) = U^{k}(x) \setminus U^{k,*}(x)$
\item $\PrcSuccessor(\graph{G}^{k},x) = S^{k,*}(x) \cup S^{k,\prime}(x)$ where $S^{k,*}(x) = \{ x^{*} \in V^{k} : \exists u^{*} \in U^{k,*}(x) \text{ s.t. } x^{*} = f(x, u^{*}) \}$ and $S^{k,\prime}(x) = \PrcSuccessor(\graph{G}^{k},x) \setminus S^{k,*}(x)$
\end{itemize}

Let $\mathcal{M}^{k}$ denote the set of all policies at the end of the $k$th iteration of the PI-\AlgRRTsharp{} algorithm, that is, let $\mathcal{M}^{k} = \{ \mu: \mu(x) \in U^{k}(x), \, \forall x \in V^{k} \}$. 
Given a policy $\mu \in \mathcal{M}^{k}$ and an initial state $x$, let $\sigma^\mu(x)$ denote the path resulting from executing the policy $\mu$ starting at $x$.
That is, $\sigma^\mu(x) = (x_0,x_1,\ldots, x_N)$ such that $\sigma_0^\mu(x)  = x$ and  $\sigma_N^\mu(x)  \in \Xgoal$ for $N > 0$, where $\sigma_j^\mu(x)$ is the $j$th element of $\sigma^\mu(x)$. By definition,  $x_{j+1} = f(x_j,\mu(x_{j}))$ for $j = 0, 1, \ldots, N-1$.
%
%Note that $\sigma^\mu(x)$ satisfies the following properties:
%\begin{itemize}
%\item $\sigma^\mu_0 (x) = x$
%\item $\sigma^\mu_N(x) \in \Xgoal$ for $N > 0$
%\item $\sigma^\mu_j (x)= x_{j} \text{ such that } x_{j+1} = f(x_j,\mu(x_{j}))$ for $j = 0, 1, \ldots, (N-1)$
%\end{itemize}
%where $\sigma_j^\mu(x)$ is the $j$th element of $\sigma^\mu(x)$.
%
Let now $\Sigma^{k} (x)$ be the set of all paths rooted at $x$ and let
$\Sigma^{k,*} (x)$ denote the set of all lowest-cost paths rooted at  $x$ that reach the goal region at the $k$th iteration of the algorithm, that is,
\begin{equation*}
\Sigma^{k,*} (x) = \{ \sigma \in \Sigma^{k} (x) : \sigma = \sigma^\mu(x) ~\text{such that}~ \exists \mu \in \mathcal{M}^{k} ,~J_{\mu} (x) = J^{k,*} (x) \}.
\end{equation*}

%Let $\duration(\sigma^\mu(x)) = N$ denote the length of $\sigma^\mu(x)$. 
Note that the set $\Sigma^{k,*} (x) $ may contain more than a single path.
Finally, let  $N^{k} (x)$ denote the  shortest path length in  $\Sigma^{k,*} (x) $, that is,
\[
N^{k} (x) = \min\limits_{ \sigma^\mu(x) \in \Sigma^{k,*} (x) } \duration(\sigma^\mu(x)) 
\]

%%%$$
%%\Big[\mathcal{G}^{k} \quad \mu^{k,i-1} \quad J_{\mu^{k,i-1}}  \quad B_{k,i-1}\Big]
%%$$

Let us define the following sets for a given policy $\mu^{k,i} \in \mathcal{M}^k$ and its corresponding value function $\Jmuki$ at the end of the $i$th policy improvement step and at the $k$th iteration of the PI-\AlgRRTsharp{} algorithm:

\begin{enumerate}[a)]

\item 
The set of vertices in $\graph{G}^{k}$  whose optimal cost value is less than that of $\xinit$,
$$
\Vprom^{k} = \{ x \in V^{k} :  J^{k,*} (x) < J^{k,*} (\xinit) \}
$$
This is the set of promising vertices.

\item 
The set of promising vertices in $\graph{G}^{k}$, whose optimal cost value is achieved by executing the policy $\mu^{k,i}$ at the $i$th policy iteration step
$$
O^{k,i} = \{ x \in V^{k} : \Jmuki(x) = J^{k,*} (x) < J^{k,*} (\xinit)\}
$$ 

\item 
The set of vertices in $\graph{G}^{k}$  that can be connected to the goal region at iteration $k$ with an optimal path of length less than or equal to $\ell$
$$
L^{k,\ell} = \{ x \in V^{k} : \exists \sigma^\mu (x) \in \Sigma^{k,*}(x) ~\mathrm{s.t.}~ N^{k} (x) \leq \ell \}
$$

\item The set of promising vertices  in $\graph{G}^{k}$  that are connected to the goal region via optimal paths whose length is less than or equal to $\ell$ 
$$
P^{k,\ell} = L^{k,\ell} \cap \Vprom^{k} 
%= \{ x \in V^{k} : N^{k} (x) \leq \ell ~\textrm{and}~ J^{k,*} (x) < J^{k,*} (\xinit)\}
$$

%%$$
%%P^{k,i} = \{ x \in V^{k} : t_{\mathrm{f}}^{*} (x) \leq i, J^{k,*}(x) < J^{k,*}(\xinit) \} \text{ and } \partial P^{k,i} = P^{k,i+1} \setminus P^{k,i}
%%$$ 

\item The set of vertices that are selected for a Bellman update during the beginning of the $i$th policy improvement
$$
B^{k,i} = \{ \overline{\PrcPredecessor}(\graph{G}^{k}, x) : x \in V^{k} , ~ J_{{\mu}^{k,i}}(x) < J_{{\mu}^{k,i}} (\xinit) \}
$$
\end{enumerate}

Note from d)  that the set of promising vertices that can be connected to the goal region via optimal paths whose length is exactly $\ell+1$ is given by
\[ 
 \partial P^{k,\ell} = P^{k,\ell+1} \setminus P^{k,\ell}
\]
It should also be clear from these definitions that $O^{k,i} \subseteq B^{k,i}$ for all $k = 1,2,\ldots$ and $i=0,1,2,\ldots$.  

%Let $L^{k,i} = \{ x \in V^{k} :  \exists \mu \in \mathcal{M}^{k} \text{ s.t. } J_{\mu} (x) = J^{k,*} (x), \, \duration(\sigma(\cdot; x, \mu)) \leq i \}$.

\begin{lemma}\label{lemma:nondecreasing_set_vertices_of_optimal_cost_value}
The sequence $O^{k,i}$ generated by the policy iteration step of the \emph{PI}-\AlgRRTsharp{} algorithm is non-decreasing, that is, 
$O^{k,i} \subseteq O^{k,i+1}$ for all $i = 0,1,\ldots$.
\end{lemma}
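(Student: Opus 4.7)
The plan is to reduce the lemma to the standard monotonicity property of policy iteration, being careful about the fact that here policy improvement only modifies the policy on the subset $B^{k,i}$ rather than on all of $V^{k}$. Fix $k$ and let $x \in O^{k,i}$, so by definition
\[
J_{\mu^{k,i}}(x) \;=\; J^{k,*}(x) \;<\; J^{k,*}(\xinit).
\]
The second inequality depends only on the graph $\graph{G}^{k}$ and not on the index $i$, so it will automatically still hold at step $i+1$. Hence it suffices to show $J_{\mu^{k,i+1}}(x) = J^{k,*}(x)$.

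First I would establish the partial-update monotonicity $J_{\mu^{k,i+1}} \le J_{\mu^{k,i}}$ pointwise. For any $y \in B^{k,i}$, the policy improvement in Algorithm~\ref{alg:replan_omp_pi} picks $\mu^{k,i+1}(y) \in \arg\min_{u \in U^{k}(y)} H(y,u,J_{\mu^{k,i}})$, so that
\[
(T_{\mu^{k,i+1}} J_{\mu^{k,i}})(y) \;\le\; (T_{\mu^{k,i}} J_{\mu^{k,i}})(y) \;=\; J_{\mu^{k,i}}(y).
\]
For $y \notin B^{k,i}$ the policy is left untouched, $\mu^{k,i+1}(y) = \mu^{k,i}(y)$, and the same inequality holds with equality. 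Consequently $T_{\mu^{k,i+1}} J_{\mu^{k,i}} \le J_{\mu^{k,i}}$ everywhere on $V^{k}$. Using the monotonicity of the operator $T_{\mu^{k,i+1}}$ and iterating this inequality, $T_{\mu^{k,i+1}}^{n} J_{\mu^{k,i}} \le J_{\mu^{k,i}}$ for every $n$; passing to the limit (which exists and equals the fixed point $J_{\mu^{k,i+1}}$ since $\graph{G}^{k}$ is connected with strictly positive cycle costs, as recalled in Section~\ref{section:formulation}) yields $J_{\mu^{k,i+1}} \le J_{\mu^{k,i}}$.

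Applying this at our vertex $x$ gives
\[
J^{k,*}(x) \;\le\; J_{\mu^{k,i+1}}(x) \;\le\; J_{\mu^{k,i}}(x) \;=\; J^{k,*}(x),
\]
where the left inequality is the definition of $J^{k,*}$ as the infimum over all policies in $\mathcal{M}^{k}$. Therefore $J_{\mu^{k,i+1}}(x) = J^{k,*}(x) < J^{k,*}(\xinit)$, which is exactly the condition for $x \in O^{k,i+1}$.

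The main obstacle I expect is justifying the partial-update monotonicity rigorously: specifically, ensuring that leaving $\mu$ unchanged outside $B^{k,i}$ is compatible with the inequality $T_{\mu^{k,i+1}} J_{\mu^{k,i}} \le J_{\mu^{k,i}}$ that drives the fixed-point argument, and that the iteration $T_{\mu^{k,i+1}}^{n} J_{\mu^{k,i}}$ converges to $J_{\mu^{k,i+1}}$ under the stochastic shortest-path hypotheses inherited from the \AlgRRG{} construction. Once that is in place, the lemma follows without further work.
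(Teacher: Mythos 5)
Your proposal is correct and follows essentially the same route as the paper: both arguments hinge on the sandwich $J^{k,*}(x) \le J_{\mu^{k,i+1}}(x) \le J_{\mu^{k,i}}(x) = J^{k,*}(x)$ together with the observation that the condition $J^{k,*}(x) < J^{k,*}(\xinit)$ is independent of $i$. If anything, you supply more detail than the paper on the one point it asserts without proof, namely that $J_{\mu^{k,i+1}} \le J_{\mu^{k,i}}$ still holds when the improvement is performed only on the subset $B^{k,i}$ (the paper instead first checks that $x\in O^{k,i}$ forces $\overline{\PrcPredecessor}(\graph{G}^{k},x)\in B^{k,i}$, a step your case split on $y\in B^{k,i}$ versus $y\notin B^{k,i}$ renders unnecessary).
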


\begin{proof}
First, note that $O^{k,0} = V^k \cap \Xgoal \neq \varnothing$.
%The policy  has the following property:
Let now $i  > 0$ and assume that $x \in O^{k,i}$.  
By definition, we have that $J_{{\mu}^{k,i}} = T_{{\mu}^{k,i}} J_{{\mu}^{k,i}}$ 
where $\mu^{k,i}$ is the policy computed at the end of $i$th policy improvement step at the $k$th iteration of the PI-\AlgRRTsharp{} algorithm.
The previous expression implies that $ (T_{{\mu}^{k,i}} J_{{\mu}^{k,i}}) (x) = J^{k,*} (x) $, and hence 
$\mu^{k,i}(x) \subseteq U^{k,*}(x)$. Similarly, the cost function $J_{{\mu}^{k,i}}$ satisfies 
$J_{{\mu}^{k,i}} (x) = J^{k,*}(x) < J^{k,*}(\xinit) \leq J_{{\mu}^{k,i}} (\xinit) $ which yields $J_{{\mu}^{k,i}} (x) < J_{{\mu}^{k,i}} (\xinit)$.
It follows that the vertex $x$ and its predecessors will be selected for Bellman update during the next policy improvement, that is, 
$\overline{\PrcPredecessor}(\graph{G}^{k},x) \in B^{k,i}$. 

After policy improvement, the  updated policy and the corresponding cost function are given by
$$
(T_{{\mu}^{k,i+1}} J_{{\mu}^{k,i}}) (x) = (T J_{{\mu}^{k,i}}) (x) = J^{k,*}(x),
$$
which implies that $(T_{{\mu}^{k,i+1}} J_{{\mu}^{k,i}}) (x) = J^{k,*}(x)$ and hence
 $\mu^{k,i+1}(x) \subseteq U^{k,*}(x) $.
 Similarly, 
$J^{k,*}(x) \leq J_{{\mu}^{k,i+1}} (x) \leq J_{{\mu}^{k,i}} (x) = J^{k,*}(x) $, and hence
$J_{{\mu}^{k,i+1}} (x) = J^{k,*}(x) < J^{k,*} (\xinit)$. It follows that $x \in O^{k,i+1}$. \QED
\end{proof}

\begin{lemma}\label{lemma:nondecreasing_set_vertices_of_paths_with_duration}
The sequence $L^{k,\ell}$ is non-decreasing, that is, $L^{k,\ell} \subseteq L^{k,\ell+1}$ for $\ell = 0, 1, \ldots$.
Furthermore, for all $x \in \partial L^{k,\ell} = L^{k,\ell+1} \setminus L^{k,\ell}$, there exists $x^{*} \in L^{k,\ell}\cap S^{k,*}(x)$.
\end{lemma}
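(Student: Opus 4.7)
The plan is to establish both parts directly from the definition of $L^{k,\ell}$ and the principle of optimality for deterministic shortest-path problems.

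For the monotonicity claim, I would simply unpack the definitions. If $x \in L^{k,\ell}$, then by definition there exists an optimal path $\sigma^\mu(x) \in \Sigma^{k,*}(x)$ with $\duration(\sigma^\mu(x)) \leq \ell$, and therefore $N^{k}(x) \leq \ell$. Since $\ell \leq \ell+1$, the same path (which is still optimal and still rooted at $x$) witnesses $N^{k}(x) \leq \ell+1$, so $x \in L^{k,\ell+1}$. This is a one-line argument.

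For the second claim, fix $x \in \partial L^{k,\ell} = L^{k,\ell+1} \setminus L^{k,\ell}$. By the definition of the sets, $N^{k}(x) > \ell$ and $N^{k}(x) \leq \ell+1$, so $N^{k}(x) = \ell+1$. Pick an optimal path $\sigma = (x_0, x_1, \ldots, x_{\ell+1})$ realizing this minimum, with $x_0 = x$ and $x_{\ell+1} \in \Xgoal$. Let $x^{*} = x_1$ and let $u$ be the control with $f(x, u) = x^{*}$. I would then argue that $x^{*} \in L^{k,\ell} \cap S^{k,*}(x)$.

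The heart of the argument is the standard optimality-principle decomposition, applied twice. First, the tail $\tilde{\sigma} = (x_1, \ldots, x_{\ell+1})$ must itself be optimal for $x_1$: otherwise replacing it by a strictly cheaper path from $x_1$ to $\Xgoal$ would produce a path from $x$ of strictly smaller cost than $J^{k,*}(x)$, contradicting the optimality of $\sigma$. Hence $\tilde{\sigma} \in \Sigma^{k,*}(x^{*})$ with length $\ell$, giving $N^{k}(x^{*}) \leq \ell$, i.e., $x^{*} \in L^{k,\ell}$. Second, the one-step Bellman identity $J^{k,*}(x) = g(x,u) + J^{k,*}(x^{*})$ (which follows because $\sigma$ achieves $J^{k,*}(x)$ and $\tilde{\sigma}$ achieves $J^{k,*}(x^{*})$) means that $u$ attains the minimum in $H(x, \cdot, J^{k,*})$, so $u \in U^{k,*}(x)$ and consequently $x^{*} = f(x,u) \in S^{k,*}(x)$.

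I do not anticipate any serious obstacle here; the lemma is essentially a restatement of Bellman's principle for this setting, and the main care is only in parsing the notation (distinguishing $N^{k}$, $\Sigma^{k,*}$, $S^{k,*}$ carefully) and in observing that equality $N^{k}(x)=\ell+1$ on the boundary forces the witness path to have exactly this length, so the one-step predecessor along it lies in $L^{k,\ell}$ rather than merely in $L^{k,\ell+1}$.
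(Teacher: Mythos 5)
Your proposal is correct and follows essentially the same route as the paper: monotonicity by unpacking the definition, and the boundary claim by taking the first step of a length-$(\ell+1)$ optimal path and invoking the principle of optimality to show the tail is optimal (hence $x^{*}\in L^{k,\ell}$) and the first control is optimal (hence $x^{*}\in S^{k,*}(x)$). Your version is if anything slightly more careful than the paper's, since you explicitly record that membership in the boundary forces $N^{k}(x)=\ell+1$ and you spell out the one-step Bellman identity rather than merely asserting that the first action ``needs to be optimal.''
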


\begin{proof}
For $\ell = 0$ we have that $L^{k,0} = V^k \cap \Xfree \not= \varnothing$.
Let now $\ell > 0$, and assume that $x \in L^{k,\ell}$. 
Then, by definition, there exists a policy $\mu \in \mathcal{M}^{k}$ such that the vertex $x$ achieves its optimal cost function value, $J_{\mu} (x) = J^{k,*} (x)$, and the optimal path connecting $x$ to the goal region has length less  than or equal to $\ell$, that is, $\duration(\sigma^\mu(x)) \leq \ell < \ell + 1$, 
which implies, trivially, that $x \in L^{k,\ell+1}$. 
%Therefore, we have $L^{k,\ell} \subseteq L^{k,\ell+1}$. Same property also holds for the sequence $P^{k,\ell}$ which can be shown as follows:
%$$
%x \in P^{k,\ell} \: \Rightarrow \: x \in L^{k,\ell} \land x \in \Vprom^{k} \: \Rightarrow \: x \in L^{k,\ell+1} \land x \in \Vprom^{k} \: \Rightarrow \: x \in P^{k,\ell+1}
%$$

To show the second part of the statement, first notice that,
by definition, the vertices in the set $\partial L^{k,\ell}$ are the ones that can be connected to the goal region via an optimal path of length exactly $\ell+1$. Let us now assume that $x \in \partial L^{k,\ell}$ and let $\sigma^\mu(x) \in \Sigma^{k,*}(x)$ be the optimal path of length $\ell+1$ between $x$ and the goal region. Let  $\sigma_1^\mu(x) = x^{*} $ and $\sigma^{\mu}(x^{*})$ be the sub-arc rooted at  $x^{*}$ resulting from applying $\mu$. By construction of the path $\sigma^{\mu}(x)$, we have that $x \in \PrcPredecessor(\graph{G}^{k},x^{*})$. 
Also, since $\sigma^{\mu}(x)$ is the optimal path rooted at $x$, the control action applied at vertex $x$ needs to be optimal , that is, 
$\mu(x) \in U^{k,*}(x)$ and $\sigma^{\mu} (x^{*})$ is the optimal path connecting $x^{*}$ to the goal region due to the \textit{principle of optimality}, where $\sigma^{\mu} (x^{*}) \in \Sigma^{k,*}(x^{*})$, which implies that $x^{*} \in S^{k,*}(x)$.
Furthermore,  $x^{*} \in L^{k,\ell}$ since $\duration(\sigma^{\mu} (x^{*})) = \ell$. \QED
\end{proof}

\begin{corollary} \label{corollary1}
The sequence $P^{k,\ell}$ is non-decreasing, that is, $P^{k,\ell} \subseteq P^{k,\ell+1}$ for $\ell = 0, 1, \ldots$.
Furthermore, for all $x \in \partial P^{k,\ell} = P^{k,\ell+1} \setminus P^{k,\ell}$, there exists $x^{*}\in P^{k,\ell} \cap S^{k,*}(x)$.
\end{corollary}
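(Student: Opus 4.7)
The plan is to derive the corollary directly from Lemma~\ref{lemma:nondecreasing_set_vertices_of_paths_with_duration} (the analogous statement for the sets $L^{k,\ell}$) by intersecting with the promising set $\Vprom^{k}$, and then to promote the successor vertex $x^{*}$ supplied by Lemma~\ref{lemma:nondecreasing_set_vertices_of_paths_with_duration} from $L^{k,\ell}$ into $P^{k,\ell}$ using the positivity of edge costs.

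For the first claim, I would simply write $P^{k,\ell} = L^{k,\ell} \cap \Vprom^{k} \subseteq L^{k,\ell+1} \cap \Vprom^{k} = P^{k,\ell+1}$, which is immediate from the monotonicity of $L^{k,\ell}$ established in Lemma~\ref{lemma:nondecreasing_set_vertices_of_paths_with_duration}. No additional argument is needed here.

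For the second claim, I would start with an arbitrary $x \in \partial P^{k,\ell} = P^{k,\ell+1} \setminus P^{k,\ell}$. Since $x \in P^{k,\ell+1}$, we have both $x \in \Vprom^{k}$ and $x \in L^{k,\ell+1}$; and since $x \notin P^{k,\ell}$ but $x \in \Vprom^{k}$, we must have $x \notin L^{k,\ell}$. Hence $x \in \partial L^{k,\ell}$, and Lemma~\ref{lemma:nondecreasing_set_vertices_of_paths_with_duration} supplies some $x^{*} \in L^{k,\ell} \cap S^{k,*}(x)$. It remains to show $x^{*} \in \Vprom^{k}$. This is where the positivity of edge costs enters: because $x^{*} \in S^{k,*}(x)$, there is an optimal path rooted at $x$ whose first edge leads to $x^{*}$, and by the principle of optimality $J^{k,*}(x) = \costValue(x,x^{*}) + J^{k,*}(x^{*})$. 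Since $\costValue(x,x^{*}) > 0$, this gives $J^{k,*}(x^{*}) < J^{k,*}(x) < J^{k,*}(\xinit)$, where the second inequality holds because $x \in \Vprom^{k}$. Therefore $x^{*} \in \Vprom^{k}$, and combined with $x^{*} \in L^{k,\ell}$ we conclude $x^{*} \in P^{k,\ell} \cap S^{k,*}(x)$.

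The argument is essentially bookkeeping, so I do not anticipate any real obstacle. The only subtlety worth being careful about is ensuring that $x^{*}$ inherits membership in $\Vprom^{k}$; this step relies on the standing assumption (used throughout Section~\ref{section:random_geometric_graphs} and the setup of Section~\ref{section:formulation}) that all edge costs in the RGG are strictly positive, so that moving one step closer to the goal strictly decreases the optimal cost-to-go. Once that is in place, the corollary follows in a couple of lines.
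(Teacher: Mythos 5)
Your proof is correct and follows essentially the same route as the paper: deduce monotonicity of $P^{k,\ell}$ by intersecting the monotone sequence $L^{k,\ell}$ with $\Vprom^{k}$, identify $\partial P^{k,\ell}$ with $\partial L^{k,\ell}\cap \Vprom^{k}$, pull $x^{*}$ from Lemma~\ref{lemma:nondecreasing_set_vertices_of_paths_with_duration}, and transfer membership in $\Vprom^{k}$ to $x^{*}$ via the Bellman relation $J^{k,*}(x)=\costValue(x,x^{*})+J^{k,*}(x^{*})$. The only (immaterial) difference is that you invoke strict positivity of the edge cost to get $J^{k,*}(x^{*})<J^{k,*}(x)$, whereas the paper only uses nonnegativity to get $J^{k,*}(x^{*})\leq J^{k,*}(x)<J^{k,*}(\xinit)$, which already suffices.
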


\begin{proof}
The first part of the result follows immediately from Lemma~\ref{lemma:nondecreasing_set_vertices_of_paths_with_duration}. 
To show the second part, notice that, from the definition of the boundary set, we can rewrite $\partial P^{k,\ell}$ as follows:
$$
\partial P^{k,\ell} = P^{k,\ell+1} \setminus P^{k,\ell} =  (L^{k,\ell+1} \cap \Vprom^{k}) \setminus (L^{k,\ell} \cap \Vprom^{k}) = \partial L^{k,\ell} \cap \Vprom^{k}
$$
Let now $x \in \partial P^{k,\ell}$, which implies that  $x \in \partial L^{k,\ell}$ and $x \in \Vprom^{k}$. 
From Lemma~\ref{lemma:nondecreasing_set_vertices_of_paths_with_duration} 
there exists $x^{*} \in L^{k,\ell} \cap S^{k,*}(x)$ such that $x \in \PrcPredecessor(\graph{G}^{k}, x^{*})$. 
We need to show that $x^{*} \in P^{k,\ell}$.
Since $x^{*} \in L^{k,\ell}$  we only need to show 
that  $x^{*} \in \Vprom^{k}$. 
Since $x$ is a promising vertex, its optimal cost value satisfies $J^{k,*}(x) < J^{k,*} (\xinit)$. We know that the optimal cost function value of $x^{*}$ satisfies $J^{k,*}(x) = g(x,u^{*}) + J^{k,*} (x^{*})$ where $x^{*} = f(x,u^{*})$ and $u^{*} \in U^{k,*}(x)$. Since $g(x,u^{*})$ is nonnegative, we have that $J^{k,*}(x^{*}) \leq J^{k,*}(x) < J^{k,*} (\xinit)$ which implies $x^{*} \in \Vprom^{k}$ and hence $x^{*} \in P^{k,\ell}$.  \QED
\end{proof}

\begin{lemma}\label{lemma:conditions_for_achieving_optimal_policy_and_cost}%Given a policy $\mu^{k,i}$ and its corresponding value function $\Jmuki$, 
Let $x \in B^{k,i}$ and assume that $\Jmuki (x^{*}) = J^{k,*}(x^{*})$ where $x^{*} \in S^{k,*}(x)$.
Then $\mu^{k,i+1}(x) \subseteq U^{k,*}(x)$ and $J_{\mu^{k,i+1}}(x) = J^{k,*}(x)$ at the end of $(i+1)$th policy improvement step at the $k$th iteration of the \emph{PI}-\AlgRRTsharp{} algorithm.
%
%\begin{enumerate}[i)]
%\item If $\Jmuki (s) = J^{k,*}(s)$, then $\mu^{k,i+1}(x) \subseteq U^{k,*}(x)$ and $J_{\mu^{k,i+1}}(x) = J^{k,*}(x)$.
%
%% % % % %\item If $\Jmuki (z) = J^{k,*}(z)$ and $\Jmuki (z^{\prime}) = J^{k,*}(z^{\prime})$, then $\mu^{k,i+1}(x) \subseteq U^{k,*}(x)$ and $J_{\mu^{k,i+1}}(x) = J^{k,*}(u)$.
%
%% % % % %\item If $\Jmuki (z) = J^{k,*}(z)$ and $\Jmuki (z^{\prime}) \neq J^{k,*}(z^{\prime})$, then $\mu^{k,i+1}(x) \subseteq U^{k,*}(x)$ and $J_{\mu^{k,i+1}}(x) = J^{k,*}(u)$.
%
%\item If $\Jmuki (s) > J^{k,*}(s)$ and $\Jmuki (s) < \Jmuki (s^{\prime})$, then $\mu^{k,i+1}(x) \subseteq U^{k,*}(x)$ and $J_{\mu^{k,i+1}}(x) > J^{k,*}(x)$.
%
%% % % % %\item $\Jmuki (u) \neq J^{k,*}(u)$ and $\Jmuki (v) \neq J^{k,*}(v)$ 
%\end{enumerate}
%
\end{lemma}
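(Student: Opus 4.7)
The plan is to show first that the policy improvement step at vertex $x$ actually selects an action in the optimal set $U^{k,*}(x)$, and then to use the monotone improvement property of policy iteration to conclude that the resulting cost matches $J^{k,*}(x)$.

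First I would note that since $x \in B^{k,i}$, by definition of the set selected for Bellman updates, the policy improvement step at iteration $i+1$ assigns
\[
\mu^{k,i+1}(x) \in \argmin_{u \in U^{k}(x)} \bigl\{ g(x,u) + J_{\mu^{k,i}}(f(x,u)) \bigr\}.
\]
I would then use the hypothesis to exhibit a candidate minimizer. Pick $u^{*} \in U^{k,*}(x)$ with $f(x,u^{*}) = x^{*}$; this exists because $x^{*} \in S^{k,*}(x)$. Then Bellman's equation applied at $x$, together with $u^{*} \in U^{k,*}(x)$, gives $g(x,u^{*}) + J^{k,*}(x^{*}) = J^{k,*}(x)$. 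Combined with the hypothesis $J_{\mu^{k,i}}(x^{*}) = J^{k,*}(x^{*})$, the candidate value equals $J^{k,*}(x)$.

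Next I would show this is actually the minimum by squeezing. For any $u \in U^{k}(x)$, since $J_{\mu^{k,i}} \geq J^{k,*}$ pointwise (a standard property of policy costs),
\[
g(x,u) + J_{\mu^{k,i}}(f(x,u)) \; \geq \; g(x,u) + J^{k,*}(f(x,u)) \; \geq \; J^{k,*}(x),
\]
with the last inequality from the definition of $J^{k,*}$. Thus the minimum equals exactly $J^{k,*}(x)$. Now for any minimizer $u^{**} = \mu^{k,i+1}(x)$, equality must propagate through both bounds above, which forces both $u^{**} \in U^{k,*}(x)$ (so the first claim is proved) and, as a useful by-product, $J_{\mu^{k,i}}(x^{**}) = J^{k,*}(x^{**})$, where $x^{**} = f(x,u^{**})$.

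For the second claim, the key observation is the monotonic improvement property of policy iteration: $J_{\mu^{k,i+1}} \leq J_{\mu^{k,i}}$ pointwise. This follows from $T_{\mu^{k,i+1}} J_{\mu^{k,i}} = T J_{\mu^{k,i}} \leq T_{\mu^{k,i}} J_{\mu^{k,i}} = J_{\mu^{k,i}}$, together with the standard contraction argument. Combined with the universal lower bound $J_{\mu^{k,i+1}} \geq J^{k,*}$, we obtain at the specific successor $x^{**}$:
\[
J^{k,*}(x^{**}) \;\leq\; J_{\mu^{k,i+1}}(x^{**}) \;\leq\; J_{\mu^{k,i}}(x^{**}) \;=\; J^{k,*}(x^{**}),
\]
so $J_{\mu^{k,i+1}}(x^{**}) = J^{k,*}(x^{**})$. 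Unrolling the Bellman equation for $J_{\mu^{k,i+1}}$ at $x$, using $u^{**} \in U^{k,*}(x)$, then yields $J_{\mu^{k,i+1}}(x) = g(x,u^{**}) + J^{k,*}(x^{**}) = J^{k,*}(x)$.

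The main delicacy, and where I would take care, is that the hypothesis only pins $J_{\mu^{k,i}}$ at the particular successor $x^{*}$, whereas $\mu^{k,i+1}$ may pick a different optimal successor $x^{**}$; I handle this by deriving $J_{\mu^{k,i}}(x^{**}) = J^{k,*}(x^{**})$ from the equality case of the two-step chain of inequalities above, rather than assuming it. The remainder is essentially bookkeeping given the monotonicity of PI.
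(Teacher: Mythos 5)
Your proof is correct and follows essentially the same route as the paper: show that the minimizer of $H(x,\cdot,J_{\mu^{k,i}})$ must lie in $U^{k,*}(x)$ by comparing against $H(x,\cdot,J^{k,*})$ via the pointwise bound $J_{\mu^{k,i}} \geq J^{k,*}$, then sandwich $J^{k,*} \leq J_{\mu^{k,i+1}} \leq J_{\mu^{k,i}}$ at the chosen successor to propagate optimality back to $x$. The one genuine difference is the delicacy you flag at the end: the paper's chain of inequalities silently evaluates $J_{\mu^{k,i}}(f(x,u^{*})) = J^{k,*}(f(x,u^{*}))$ for \emph{every} $u^{*} \in U^{k,*}(x)$, i.e.\ it reads the hypothesis as holding for all of $S^{k,*}(x)$, whereas the way the lemma is later invoked (via Corollary~\ref{corollary1}, which only supplies \emph{one} optimal successor in $P^{k,\ell}$) really only licenses the ``for some $x^{*}$'' reading. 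Your equality-case argument --- deriving $J_{\mu^{k,i}}(x^{**}) = J^{k,*}(x^{**})$ at whatever optimal successor $x^{**}$ the new policy actually selects, rather than assuming it --- closes that gap and proves the statement under the weaker hypothesis that is actually available downstream. The only minor imprecision is your claim that $T_{\mu^{k,i+1}} J_{\mu^{k,i}} = T J_{\mu^{k,i}}$ globally; since improvement is performed only on $B^{k,i}$ this equality holds only there, but the inequality $T_{\mu^{k,i+1}} J_{\mu^{k,i}} \leq J_{\mu^{k,i}}$ needed for monotone improvement still holds everywhere, so the argument stands.
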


\begin{proof}
We will first show that the function $H(x,\cdot,\Jmuki)$ in (\ref{def:Hfun}) obeys a strict inequality when evaluated  at elements of the sets $U^{k,*}(x)$ and  $U^{k,\prime}(x)$. 
At the beginning of the $(i+1)$th policy improvement step, the new policy $\mu^{k,i+1}$ is computed as follows. For all $x \in B^{k,i}$
\begin{align*}
\mu^{k,i+1} (x) &\in \arg \min_{u \in U^{k}(x)} H(x,u,\Jmuki) \\
&=  \arg\min\limits_{ u \in U^{k}(x)} \Bigl\{ g(x,u) +  \Jmuki (f(x,u))\Bigr\}. 
\end{align*}
Let $u^{*} \in U^{k,*}(x)$ and $u^{\prime} \in U^{k,\prime}(x)$. 
We then have the following:
\begin{align*}
%\min H(x,u,\Jmuki) &= \min g(x,u) +  \Jmuki( f(x,u)) \\
H(x,u^{*},\Jmuki) &= g(x,u^{*}) +  \Jmuki( f(x,u^{*})) \\
&= g(x,u^{*}) +  \Jmuki(x^{*}) = g(x,u^{*}) +  J^{k,*}(x^{*}) \\
&<  g(x,u^{\prime}) +  J^{k,*}( x^{\prime}) \\
&= g(x,u^{\prime}) +  J^{k,*}( f(x,u^{\prime})) \\
&\leq g(x,u^{\prime}) +  \Jmuki( f(x,u^{\prime}))\\
&= H(x,u^{\prime},\Jmuki).
\end{align*}
This implies that
$$
H(x,u^{*},\Jmuki) < H(x,u^{\prime},\Jmuki) \quad \forall u^{*} \in U^{*}(x), u^{\prime} \in U^{\prime}(x) .
$$
Hence, it follows that
$$
\mu^{k,i+1} (x) \in \argmin_{u \in U^{k}(x)} H(x,u,\Jmuki) = \argmin_{u \in U^{k,*}(x)} H(x,u,\Jmuki)
$$
and thus $\mu^{k,i+1}(x) \subseteq U^{k,*}(x)$ for all $x \in B^{k,i}$.
Let $x \in B^{k,i}$. 
The cost function $J_{\mu^{k,i+1}}$ for $x^{*} \in S^{k,*}(x)$ is computed during the policy evaluation step for the new policy $\mu^{k,i+1}$, as follows
$$
J^{k,*}(x^{*}) \leq J_{\mu^{k,i+1}}(x^{*}) \leq \Jmuki(x^{*}) = J^{k,*}(x^{*}) \quad \Rightarrow \quad  J_{\mu^{k,i+1}}(x^{*}) = J^{k,*}(x^{*}) \quad \forall x^{*} \in S^{k,*}(x)
$$
allowing us to write $J_{\mu^{k,i+1}} (x)$ as follows:
$$
J_{\mu^{k,i+1}} (x) = g(x,u^{*}) +  J_{\mu^{k,i+1}}( f(x,u^{*})) = g(x,u^{*}) +  J_{\mu^{k,i+1}}(x^{*}) = g(x,u^{*}) +  J^{k,*}(x^{*}) = J^{k,*}(x),
$$
which implies that $J_{\mu^{k,i+1}} (x) = J^{k,*}(x)$. \QED
%
%%\item 
%\begin{comment}
%Despite having $\Jmuki (s) > J^{k,*}(s)$, one can still show the following property by using the steps given in the proof of (i)
%$$
%H(x,u^{*},\Jmuki) < H(x,u^{\prime},\Jmuki) \quad \forall u^{*} \in U^{*}(x), u^{\prime} \in U^{\prime}(x) .
%$$
%as long as $\Jmuki (s) < \Jmuki (s^{\prime})$ holds for all $s \in S^{k,*}$ and $s^{\prime} \in S^{k,\prime}$. Then, it follows that $\mu^{k,i+1}(x) \subseteq U^{k,*}(x)$.
%
%By using this information, we can compute $J_{\mu^{k,i+1}} (x)$ as follows:
%$$
%J_{\mu^{k,i+1}} (x) = g(x,u^{*}) +  J_{\mu^{k,i+1}}( f(x,u^{*})) = g(x,u^{*}) +  J_{\mu^{k,i+1}}( s) > g(x,u^{*}) +  J^{k,*}( s) = J^{k,*}(x)
%$$
%which implies that $J_{\mu^{k,i+1}} (x) = J^{k,*}(x)$.
%%\item $H(x,u,\Jmuki) = g(x,u) +  \Jmuki( f(x,u)), \quad x \in X, u \in U(x).$
%%$U^{\prime}(x) = U(x) \setminus U^{*}(x)$
%%\end{enumerate}
%\end{comment}
\end{proof}

%%%%$T_{\mu^{k,i+1}} \Jmuki = T \Jmuki$
%%%%
%%%%1) If $\Jmuki$ is equal to the optimal cost-to-go value $J^{k,*}$, then the next policy $\mu^{k,i+1}$ will be equal to the optimal policy $\mu^{k,*}$ and its associated cost-to-go value $J_{\mu^{k,i+1}}$ will be equal to the optimal cost-to-go value $J^{k,*}$. Having the optimal cost-to-go values $J^{k,*}$ and performing the policy improvement over its values is sufficient to yield the optimal control policy and its associated optimal cost-to-go value. However, this condition is not needed when considering a single state. 
%%%%
%%%%a) $\Jmuki$

\begin{lemma}\label{lemma:evolution_of_vertices_along_optimal_paths_under_policy_improvement}
Let the policy $\mu^{k,i}$ and its corresponding cost function $\Jmuki$, and assume that $P^{k,\ell} \subseteq O^{k,i}$. 
Then $\partial P^{k,\ell} \subseteq B^{k,i}$, which implies that $P^{k,\ell+1} \subseteq B^{k,i}$ before the beginning of the $(i+1)$th policy improvement step. 
Furthermore, $P^{k,\ell+1} \subseteq O^{k,i+1}$ after the  $(i+1)$th policy improvement step.
\end{lemma}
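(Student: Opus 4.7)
The plan is to unpack each of the three claims in turn, leaning on Corollary~\ref{corollary1} (to locate an optimal successor of any vertex in $\partial P^{k,\ell}$), on the hypothesis $P^{k,\ell}\subseteq O^{k,i}$ (to turn optimality of cost into strict inequality against $\xinit$), and on Lemma~\ref{lemma:conditions_for_achieving_optimal_policy_and_cost} (to propagate optimality one edge back during the next policy improvement).

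First I would prove $\partial P^{k,\ell}\subseteq B^{k,i}$. Pick $x\in\partial P^{k,\ell}$. By Corollary~\ref{corollary1} there exists $x^{*}\in P^{k,\ell}\cap S^{k,*}(x)$, so in particular $x\in\PrcPredecessor(\graph{G}^{k},x^{*})$. The hypothesis $P^{k,\ell}\subseteq O^{k,i}$ gives $J_{\mu^{k,i}}(x^{*})=J^{k,*}(x^{*})<J^{k,*}(\xinit)\le J_{\mu^{k,i}}(\xinit)$, so $x^{*}$ satisfies the defining inequality for membership of $B^{k,i}$; consequently $\overline{\PrcPredecessor}(\graph{G}^{k},x^{*})\subseteq B^{k,i}$, and in particular $x\in B^{k,i}$. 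Combining this with the observation $O^{k,i}\subseteq B^{k,i}$ (already noted after the definitions), and with $P^{k,\ell}\subseteq O^{k,i}$, the decomposition $P^{k,\ell+1}=P^{k,\ell}\cup\partial P^{k,\ell}$ yields $P^{k,\ell+1}\subseteq B^{k,i}$.

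Next I would show $P^{k,\ell+1}\subseteq O^{k,i+1}$ after the $(i{+}1)$th policy improvement step. Split $P^{k,\ell+1}$ into $P^{k,\ell}$ and $\partial P^{k,\ell}$. For $x\in P^{k,\ell}$, the inclusion follows from $P^{k,\ell}\subseteq O^{k,i}$ and Lemma~\ref{lemma:nondecreasing_set_vertices_of_optimal_cost_value} (monotonicity of $O^{k,i}$), which gives $O^{k,i}\subseteq O^{k,i+1}$. For $x\in\partial P^{k,\ell}$, the previous step puts $x$ in $B^{k,i}$, and Corollary~\ref{corollary1} supplies $x^{*}\in P^{k,\ell}\cap S^{k,*}(x)\subseteq O^{k,i}\cap S^{k,*}(x)$, so $J_{\mu^{k,i}}(x^{*})=J^{k,*}(x^{*})$. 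Lemma~\ref{lemma:conditions_for_achieving_optimal_policy_and_cost} applied to this $x$ and $x^{*}$ then yields $\mu^{k,i+1}(x)\subseteq U^{k,*}(x)$ and $J_{\mu^{k,i+1}}(x)=J^{k,*}(x)$; since $x\in\Vprom^{k}$ gives $J^{k,*}(x)<J^{k,*}(\xinit)$, we conclude $x\in O^{k,i+1}$.

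The only real obstacle is the bookkeeping in the second part: one must be careful that the condition defining $B^{k,i}$ uses the \emph{current} cost $J_{\mu^{k,i}}$ rather than $J^{k,*}$, so the step that converts ``$x^{*}\in O^{k,i}$'' into ``$x^{*}$ triggers $\overline{\PrcPredecessor}(\graph{G}^{k},x^{*})\subseteq B^{k,i}$'' requires passing through the chain $J_{\mu^{k,i}}(x^{*})=J^{k,*}(x^{*})<J^{k,*}(\xinit)\le J_{\mu^{k,i}}(\xinit)$. Aside from this, everything is a direct assembly of the previous lemmas, and no new machinery is needed.
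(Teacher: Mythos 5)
Your proof is correct and follows essentially the same route as the paper's: Corollary~\ref{corollary1} supplies the optimal successor $x^{*}\in P^{k,\ell}\cap S^{k,*}(x)$, the chain $J_{\mu^{k,i}}(x^{*})=J^{k,*}(x^{*})<J^{k,*}(\xinit)\le J_{\mu^{k,i}}(\xinit)$ places $\overline{\PrcPredecessor}(\graph{G}^{k},x^{*})$ in $B^{k,i}$, and Lemma~\ref{lemma:nondecreasing_set_vertices_of_optimal_cost_value} together with Lemma~\ref{lemma:conditions_for_achieving_optimal_policy_and_cost} handle the two pieces of $P^{k,\ell+1}=P^{k,\ell}\cup\partial P^{k,\ell}$. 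Your explicit check that $x\in\Vprom^{k}$ gives the strict inequality needed for membership in $O^{k,i+1}$ is a welcome bit of care that the paper leaves implicit.
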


\begin{proof}
%By assumption, $P^{k,\ell} \subseteq O^{k,i}$.
As shown in Corollary~\ref{corollary1}, for all $x \in \partial P^{k,\ell}$, there exists $\succx \in P^{k,\ell}$ such that $\succx \in S^{k,*}(x)$.
The last inclusion which, in particular, that $\succx \in  \PrcSuccessor(\graph{G}^{k},x)$, equivalently, $x \in \PrcPredecessor(\graph{G}^{k}, \succx)$. 
Since, by assumption, $P^{k,\ell} \subseteq O^{k,i}$, we have that $\succx \in O^{k,i}$ and thus the following holds:
$$
\Jmuki (\succx) = J^{k,*} (\succx) < J^{k,*} (\xinit) \leq \Jmuki (\xinit) \quad \Rightarrow \quad \Jmuki (\succx) < \Jmuki (\xinit).
$$
Therefore, $\overline{\PrcPredecessor}(\graph{G}^{k}, \succx) \in B^{k,i}$, which implies that all vertices of $\partial P^{k,\ell}$ are selected for a Bellman update before the $(i+1)$th policy improvement step, and hence $\partial P^{k,\ell} \subseteq B^{k,i}$ and $P^{k,\ell+1} \subseteq B^{k,i}$.

From Corollary~\ref{corollary1} we have that $P^{k,\ell} \subseteq P^{k,\ell+1}$. 
Since the sequence $O^{k,i}$ is non-decreasing  (Lemma~\ref{lemma:nondecreasing_set_vertices_of_optimal_cost_value}),
%i.e., $O^{k,i} \subseteq O^{k,i+1}$, this implies 
it follows
that $P^{k,\ell} \subseteq O^{k,i} \subseteq O^{k,i+1}$. 
Therefore, in order to prove that $P^{k,\ell+1} \subseteq O^{k,i+1}$
we only need to show that  $\partial P^{k,\ell} \subseteq O^{k,i+1}$ by the end of the $(i+1)$th policy improvement.
From Lemma~\ref{lemma:conditions_for_achieving_optimal_policy_and_cost}, and since $\Jmuki (\succx) = J^{k,*}(\succx), \, \succx \in S^{k,*}(x)$, all vertices of $\partial P^{k,\ell}$ achieve their optimal policy and cost function value after the end of the policy improvement step, and thus $\mu^{k,i+1}(x) \subseteq U^{k,*}(x)$ and $J_{\mu^{k,i+1}}(x) = J^{k,*}(x)$. This implies that $\partial P^{k,\ell} \subseteq O^{k,i+1}$, thus completing the proof. \QED
\end{proof}

\begin{lemma}\label{lemma:nondecreasing_set_vertices_of_optimal_costvalue}
All vertices whose optimal cost value is less than that of $\xinit$, and which are part of an optimal path from $\xinit$ to $\Xgoal$ whose length is less than or equal to $i$, achieve their optimal cost value at the end of the $i$th policy improvement step, that is, $P^{k,i} \subseteq O^{k,i}$ for $i = 0, 1, \ldots$ when using policy $\mu^{k,i}$.
%and its corresponding cost function $\Jmuki$. 
%Let $S^{n,k}$ denote the set of vertices which are selected for policy improvement, i.e., $S^{n,k} = \{ \{x, \PrcPredecessor(\graph{G}^{n}, x) \}: J_{{\mu}^{n,k}}(x) < J_{{\mu}^{n,k}} (\xinit) \}$.  Then, we have $C^{n,k} \subseteq S^{n,k}$. This implies that all optimal paths which have less than $k$ edges are computed after $k$the policy improvement step. 
\end{lemma}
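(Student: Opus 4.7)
The plan is to prove the statement by induction on $i$, leveraging Lemma~\ref{lemma:evolution_of_vertices_along_optimal_paths_under_policy_improvement} for the inductive step. The structure of the definitions of $P^{k,\ell}$ and $O^{k,i}$ is set up precisely so that this induction goes through cleanly.

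For the base case $i=0$, I would examine $P^{k,0} = L^{k,0} \cap \Vprom^k$. The set $L^{k,0}$ consists of vertices from which an optimal path of length zero reaches $\Xgoal$, i.e., vertices already in $\Xgoal$. By the initialization in Algorithm~\ref{alg:rrtsharp_omp_pi} (the graph is seeded with $\xgoal$ and $J(\xgoal) = 0$), any promising goal vertex attains its optimal cost value zero under $\mu^{k,0}$, so it belongs to $O^{k,0}$. Hence $P^{k,0} \subseteq O^{k,0}$.

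For the inductive step, assume $P^{k,i} \subseteq O^{k,i}$ at the end of the $i$th policy improvement step. I would then invoke Lemma~\ref{lemma:evolution_of_vertices_along_optimal_paths_under_policy_improvement} with $\ell = i$. That lemma tells us that the hypothesis $P^{k,i} \subseteq O^{k,i}$ forces $\partial P^{k,i} \subseteq B^{k,i}$ (so the vertices one optimal edge deeper than $P^{k,i}$ are included in the Bellman update performed before the $(i+1)$th policy improvement), and that after that improvement $P^{k,i+1} \subseteq O^{k,i+1}$. This is exactly the inductive conclusion.

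The proof therefore reduces to correctly instantiating the previous lemma with the right indices; there is essentially no additional work beyond checking the base case. The main conceptual point, which is actually discharged by the previous lemmas rather than here, is that the identification of the promising set $B^{k,i}$ using $\Jmuki$-values (not true optima) still captures every predecessor of an already-optimal promising vertex; once that is granted, optimality propagates one edge at a time, in lock-step with $i$. I do not anticipate any obstacle in the present lemma itself, since all the heavy machinery (Corollary~\ref{corollary1} and Lemmas~\ref{lemma:nondecreasing_set_vertices_of_optimal_cost_value}--\ref{lemma:evolution_of_vertices_along_optimal_paths_under_policy_improvement}) has been established.
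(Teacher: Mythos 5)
Your proposal is correct and follows essentially the same route as the paper: induction on $i$, with the base case $i=0$ handled by noting that $P^{k,0}$ consists of goal vertices whose cost is already $0 = J^{k,*}$ under any policy, and the inductive step discharged by instantiating Lemma~\ref{lemma:evolution_of_vertices_along_optimal_paths_under_policy_improvement} with $\ell = i$. The only cosmetic difference is that the paper also writes out an explicit (and logically redundant) second base case for $i=1$, which your argument correctly subsumes in the inductive step.
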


\begin{proof}
The claim $P^{k,i} \subseteq O^{k,i}$ will be shown using induction.

\begin{description}
\item[Basis $i = 0$:] First, note that $\Vgoal^{k} \neq \varnothing$. Let us now assume that $x \in \Vgoal^{k}$. 
Then $N^{k}(x) = 0$ for all $k=1,2,\ldots$, and $J^{k,*} (x) = 0 < J^{k,*} (\xinit)$. 
Therefore, $P^{k,0} = \Vgoal^{k}$. Also, for all $x \in P^{k,0}$, we have that $J_{\mu^{k,0}}(x) = J^{k,*} (x) = 0 < J^{k,*} (\xinit)$, which implies $P^{k,0} \subseteq O^{k,0}$.

\item[Basis $i = 1$:] 
The set of vertices along optimal paths whose length is less than or equal to 1 is a subset of goal vertices and their predecessors, that is, $P^{k,1} = P^{k,0} \cup \{ x \in V^{k} : \exists x^{\prime} \in V^{k} \cap \Xgoal \text{ s.t. } x \in \PrcPredecessor(\graph{G}^{k},\succx) ,\, \costValue(x,\succx) < J^{k,*} (\xinit) \}$. For all $\succx \in \Vgoal^{k}$, we have that $J_{{\mu}^{k,0}}(\succx) = 0 < J_{{\mu}^{k,0}} (\xinit)$. 
Therefore, all goal vertices and their predecessors are selected for Bellman update at the beginning of the first policy improvement step, hence $B^{k,0} = \{ \overline{\PrcPredecessor}(\graph{G}^{k},\succx) : \succx \in \Vgoal^{k} \}$, which implies that $P^{k,1} \subseteq B^{k,0}$. 
All vertices in $P^{k,1}$ will achieve their optimal cost values at the end of the first policy improvement step, that is, $J_{\mu^{k,1}} (x) = J^{k,*} (x) = \costValue(x,\succx)$, where $x \in P^{k,1}$, $\succx \in S^{k,*}(x)$ and $\succx \in \Vgoal$, which implies that $P^{k,1} \subseteq O^{k,1}$.

\item [Inductive step:]
Let us now assume that $P^{k,i} \subseteq O^{k,i}$ holds. 
We need to show that this assumption implies that $P^{k,i+1} \subseteq O^{k,i+1}$ at the end of $(i+1)$th policy improvement step. 
The proof of this statement follows directly from Lemma~\ref{lemma:evolution_of_vertices_along_optimal_paths_under_policy_improvement} by taking $\ell = i$. \QED
\end{description}
\end{proof}

\begin{theorem}[Optimality of Each Iteration] 
The optimal action and the optimal cost value for the initial vertex is achieved when the $\PrcReplan$ procedure of the \emph{PI}-\AlgRRTsharp{} algorithm terminates after a finite number of policy improvement steps.
%Let $\graph{G}^{k} = (V^{k},E^{k})$ be the graph built at $k$th iteration. In the $\PrcReplan$ procedure, the optimal control at the initial vertex $\mu^{k,*}(\xinit)$ and its corresponding optimal cost function value $J^{k,*}(\xinit)$ are computed at most $t_{\mathrm{f}}^{k,*} (\xinit)$ steps where $t_{\mathrm{f}}^{k,*} (\xinit)$ is the shortest length among all optimal paths between $\xinit$ and $\Xgoal$ encoded in the graph $\graph{G}^{k}$.
\end{theorem}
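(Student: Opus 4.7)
The plan is to combine the lemmas that precede the theorem, applied at a specific horizon dictated by the length of the shortest optimal path from $\xinit$ to $\Xgoal$ in the current graph. Let $N^{*} = N^{k}(\xinit)$ denote the length of a shortest optimal path from $\xinit$ to $\Xgoal$ in $\graph{G}^{k}$, and let $(\xinit = x_0, x_1, \dots, x_{N^{*}})$ be such a path, with $x_{N^{*}} \in \Xgoal$. By the principle of optimality, every suffix $(x_j, \dots, x_{N^{*}})$ is itself optimal for $x_j$, so $N^{k}(x_j) = N^{*} - j$, and since edge costs are positive, $J^{k,*}(x_j) < J^{k,*}(\xinit)$ for every $j \geq 1$. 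Hence $x_1, \dots, x_{N^{*}} \in \Vprom^{k}$, and in particular $x_1 \in P^{k,N^{*}-1}$.

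I would then invoke Lemma~\ref{lemma:nondecreasing_set_vertices_of_optimal_costvalue} at iteration $i = N^{*} - 1$, which gives $P^{k,N^{*}-1} \subseteq O^{k,N^{*}-1}$, so that at the end of the $(N^{*}-1)$th policy improvement step $J_{\mu^{k,N^{*}-1}}(x_1) = J^{k,*}(x_1)$ and $\mu^{k,N^{*}-1}(x_1) \subseteq U^{k,*}(x_1)$. Because $x_1 \in S^{k,*}(\xinit)$, the vertex $\xinit$ belongs to $\PrcPredecessor(\graph{G}^{k}, x_1)$, and since $J_{\mu^{k,N^{*}-1}}(x_1) = J^{k,*}(x_1) < J^{k,*}(\xinit) \le J_{\mu^{k,N^{*}-1}}(\xinit)$, the selection rule defining $B^{k,N^{*}-1}$ places $\xinit$ into the Bellman-update set. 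Applying Lemma~\ref{lemma:conditions_for_achieving_optimal_policy_and_cost} with $x = \xinit$ and $x^{*} = x_1$ then yields $\mu^{k,N^{*}}(\xinit) \subseteq U^{k,*}(\xinit)$ and $J_{\mu^{k,N^{*}}}(\xinit) = J^{k,*}(\xinit)$, which is exactly the claim of optimality for $\xinit$.

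For finite termination of $\PrcReplan$, I would argue using the classical monotonicity of policy iteration together with the fact that $V^{k}$ and $U^{k}$ are finite so $\mathcal{M}^{k}$ is finite. Each policy improvement either produces a policy strictly better (in at least one component of the value function) than the previous one, or produces $\mu^{k,i+1}$ with $T_{\mu^{k,i+1}} J_{\mu^{k,i}} = T J_{\mu^{k,i}} = J_{\mu^{k,i}}$, i.e., a fixed point of $T$ on the current graph. Since distinct policies cannot be revisited and $|\mathcal{M}^{k}|$ is finite, the process must halt after a finite number of steps, at which point $\max_{x \in B} \Delta J(x) = 0 \le \epsilon$ and the termination test of Algorithm~\ref{alg:replan_omp_pi} succeeds. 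Combining this with the preceding horizon argument, termination occurs after at most $N^{*}$ policy improvement steps with $\xinit$ at its optimum.

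The main obstacle, and the reason a careful argument is needed, is that $\xinit$ is \emph{not} itself a member of $\Vprom^{k}$ (the strict inequality in the definition excludes it), so Lemma~\ref{lemma:nondecreasing_set_vertices_of_optimal_costvalue} does not directly certify optimality at $\xinit$. The proof therefore has to apply the inductive promising/backup machinery to the immediate successor $x_1$ along an optimal path and then use one additional policy-improvement step — justified by Lemma~\ref{lemma:conditions_for_achieving_optimal_policy_and_cost} — to propagate optimality back to $\xinit$. A secondary subtlety is that the tolerance $\epsilon$ in the termination check of $\PrcReplan$ must be understood as allowing exact equality ($\epsilon \ge 0$), which is consistent with the finite-policy-space argument above.
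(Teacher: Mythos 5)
Your argument establishes only the easy half of the theorem, the one the paper dismisses in a single sentence (``Otherwise, optimality follows directly from Lemma~\ref{lemma:nondecreasing_set_vertices_of_optimal_costvalue}''): namely, that \emph{if} the $\PrcReplan$ loop executes at least $N^{k}(\xinit)$ policy improvement steps, then $\xinit$ attains its optimal action and cost. The genuine difficulty, and the entire content of the paper's proof, is the complementary case: the termination test $\max_{x\in B}\Delta J(x)\le\epsilon$ may fire at some step $i < N^{k}(\xinit)$, before your horizon argument has had time to propagate optimality from $x_1$ back to $\xinit$. Your proposal asserts at the end that ``termination occurs after at most $N^{*}$ policy improvement steps with $\xinit$ at its optimum,'' but nothing you wrote rules out early termination at a suboptimal value. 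The paper closes this gap by contradiction: assuming termination at step $i$ with $J_{\mu^{k,i-1}}(\xinit)>J^{k,*}(\xinit)$, it uses the fact that no policy changed on $B^{k,i-1}$ to bootstrap $P^{k,i-1}\subseteq O^{k,i-1}$ into $P^{k,i}\subseteq O^{k,i-1}$, iterates this to get $P^{k,\ell}\subseteq O^{k,i-1}$ for all $\ell$, and then evaluates $J_{\mu^{k,i-1}}(\xinit)$ at $\ell=N^{k}(\xinit)-1$ to conclude it already equals $J^{k,*}(\xinit)$ --- contradiction. You need some version of this argument.

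A second, related weakness is your appeal to ``classical monotonicity of policy iteration'' for finite termination. In this algorithm policy improvement is performed only on the estimated set $B^{k,i}$, not on all of $V^{k}$, so the textbook implication ``no improvement anywhere $\Rightarrow$ $J_{\mu}=TJ_{\mu}$ $\Rightarrow$ $J_{\mu}=J^{k,*}$'' is not available: the termination test certifies stationarity only on $B^{k,i-1}$, and one must separately show that this restricted stationarity suffices for optimality at $\xinit$. That is again precisely what the paper's contradiction argument supplies (and the bound on the number of steps is the subject of the following theorem in the paper, proved with its own argument). Your positive direction --- placing $\xinit$ in $B^{k,N^{*}-1}$ via the optimal successor $x_1$ and invoking Lemma~\ref{lemma:conditions_for_achieving_optimal_policy_and_cost} --- is sound and mirrors the machinery the paper uses, but as written the proof of the stated theorem is incomplete.
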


\begin{proof}
%The initial vertex $\xinit$ is in $P^{k,N}$ by definition where $N = t_{\mathrm{f}}^{k,*} (\xinit)$. It is shown that $P^{k,i} \in O^{k,i}$ when the $\PrcReplan$ terminates at the end of $i$th iteration in Lemma~[4].
We will investigate the case in which the algorithm terminates before performing $N^{k}(\xinit)$ policy improvement steps, where $k$ in the number of iterations the PI-\AlgRRTsharp{} algorithm has performed up to that point. 
Otherwise, optimality follows directly from Lemma~\ref{lemma:nondecreasing_set_vertices_of_optimal_costvalue}.
%if the $\PrcReplan$ procedure terminates after performing more than or equal to %$N^{k}(\xinit)$ policy improvement steps.

To this end, assume, on the contrary, that the $\PrcReplan$ procedure terminates at the end of the $i$th policy improvement step at the $k$th iteration of the 
PI-\AlgRRTsharp{} algorithm with a suboptimal cost function value for the initial vertex, that is, assume that $J_{\mu^{k,i-1}}(\xinit) > J^{k,*}(\xinit)$. 
Since the termination condition holds, there will be no policy update for all vertices in $B^{k,i-1}$.
That is, for all $x \in B^{k,i-1}$, we have that $\mu^{k,i}(x) = \mu^{k,i-1}(x)$. 
From Lemma~\ref{lemma:nondecreasing_set_vertices_of_optimal_costvalue} it follows that $P^{k,i-1} \subseteq O^{k,i-1}$ for all $i = 1, 2, \ldots$. 
This implies that $P^{k,i} \subseteq B^{k,i-1}$ at the beginning of the $i$th policy improvement step and $P^{k,i} \subseteq O^{k,i}$ at the end of $i$th policy improvement
step because of Lemma~\ref{lemma:evolution_of_vertices_along_optimal_paths_under_policy_improvement}. 
As a result, all vertices in $P^{k,i}$ achieve their optimal action and their optimal cost value at the end of the $i$th policy improvement step. 
Consequently, for all $x \in P^{k,i}$, we have that $\mu^{k,i}(x) = \mu^{k,*}(x) \text{ and } \Jmuki (x) = J^{k,*} (x)$.

Since for all vertices in $B^{k,i-1}$ there is no update observed between policies $\mu^{k,i}$ and $\mu^{k,i-1}$, we have that $\mu^{k,i}(x) = \mu^{k,i-1}(x) = \mu^{k,*}(x)$ for all $x \in P^{k,i} \subseteq B^{k,i-1}$.
Next, we investigate the cost function value of the vertices in $P^{k,i}$ at the beginning of $i$th policy improvement step and reach a contradiction.

We already know that vertices in $P^{k,i-1}$ have achieved their optimal cost function values. 
Since  $P^{k,i} = P^{k,i-1} \cup \partial P^{k,i-1}$, we thus only need to check the cost values for all the vertices in the boundary set $\partial P^{k,i-1}$. 
For all vertices in $\partial P^{k,i-1}$, their cost function values can be expressed as $J_{\mu^{k,i-1}} (x) = g(x, \mu^{k,i-1}(x)) +  J_{\mu^{k,i-1}}( f(x, \mu^{k,i-1}(x)))$. 
We already know that $\mu^{k,i-1}(x) = \mu^{k,*}(x)$ holds for all vertices in $\partial P^{k,i-1} \subseteq B^{k,i-1}$. 
Let us define $\mu^{k,*}(x) = u^{*} \in U^{k,*}(x)$ and $x^{*}\in S^{k,*}(x)$ such that $x^{*} = f(x,u^{*})$. 
Since $x^{*} \in P^{k,i-1}$, the optimal successor achieves its optimal cost  value, that is, $J_{\mu^{k,i-1}}(x^{*}) = J^{k,*} (x^{*})$. 
Then, for all vertices in $\partial P^{k,i-1}$, we can express their cost function value as $J_{\mu^{k,i-1}} (x) = g(x, u^{*}) +  J^{k,*}(\succx) = J^{k,*} (x)$. 
This implies that all vertices in $P^{k,i}$ already have achieved their optimal action and the cost values at the beginning of the $i$th policy improvement step, that is, $P^{k,i} \subseteq O^{k,i-1}$. 
We have thus shown that $ P^{k,i-1} \subseteq O^{k,i-1}$ implies $P^{k,i} \subseteq O^{k,i-1}$ for all $i = 1, 2, \ldots$. 
It follows that $ P^{k,\ell} \subseteq O^{k,i-1}$ for $\ell = 0, 1, \ldots$.

Next, consider the case when $\ell = N^{k}(\xinit)-1$. 
From the previous analysis this implies that $P^{k,\ell} \subseteq O^{k,i-1}$, which, in turn, 
implies that all vertices which may be intermediate vertices along optimal paths between $\xinit$ and the 
goal region achieve their optimal action and cost value at the beginning of the $i$th policy improvement step. 
Note that $\xinit$ is selected for a Bellman update at the beginning of the 
$i$th policy improvement step, since its cost function value can be written as 
$J_{\mu^{k,i-1}} (\xinit) = g(\xinit, u) +  J_{\mu^{k,i-1}} (\succx)$, 
where $u \in U^{k}(\xinit)$ and $\succx \in S^{k}(\xinit)$ such that $ u = \mu^{k,i-1}(\xinit)$ and $\succx = f(\xinit, u)$. 
This implies that $\xinit \in \PrcPredecessor(\graph{G}^{k},\succx)$ and $J_{\mu^{k,i-1}} (\succx) < J_{\mu^{k,i-1}} (\xinit)$ and therefore, 
$\xinit \in B^{k,i-1}$. 
However, since the termination condition holds, a Bellman update for $\xinit$ does not yield any update in its action during the $i$th policy improvement step, and thus $\mu^{k,i}(\xinit) = \mu^{k,i-1}(\xinit)$. 
We also know that, 
Since $S^{k,*}(\xinit) \subseteq P^{k,\ell}$ and $P^{k,N-1} \subseteq O^{k,i-1}$, it follows 
that all vertices in $S^{k,*}(\xinit)$ have achieved their optimal action and their optimal cost value at the beginning of the $i$th policy iteration.
That is, $\mu^{k,i-1}(\succx) = \mu^{k,*}(\succx)$ and $J^{k,i-1}(\succx) = J^{k,*}(\succx)$ for all $\succx \in S^{k,*}(\xinit)$. 
It follows from  Lemma~\ref{lemma:conditions_for_achieving_optimal_policy_and_cost} that $\mu^{k,i}(\xinit) \subseteq U^{k,*}(\xinit)$ and $\Jmuki(\xinit) = J^{k,*}(\xinit)$ 
at the end of $i$th policy improvement step.
This implies that $\mu^{k,i-1}(\xinit) = \mu^{k,i}(\xinit) = \mu^{k,*}(\xinit)$. 
The cost value of $\xinit$ at the beginning of $i$th policy improvement step  is given 
by
$J_{\mu^{k,i-1}} (\xinit) = g(x, \mu^{k,i-1}(\xinit)) +  J_{\mu^{k,i-1}}( f(x, \mu^{k,i-1}(\xinit)))$. 
We know that $\mu^{k,i-1}(\xinit) = \mu^{k,*}(x)$. 
Let $\mu^{k,*}(\xinit) = u^{*} \in U^{k,*}(\xinit)$ and $\succx \in S^{k,*}(\xinit)$ such that $\succx = f(x,u^{*})$. 
Since $\succx \in P^{k,\ell}$, and $\ell = N^{k}(\xinit)-1$, $\succx$ achieves the optimal cost function value, and hence $J_{\mu^{k,i-1}}(\succx) = J^{k,*} (\succx)$. 
We thus have 
$J_{\mu^{k,i-1}} (\xinit) = g(x, u^{*}) +  J^{k,*}( \succx) = J^{k,*} (\xinit)$.

We have thus shown that $J_{\mu^{k,i-1}} (\xinit) = J^{k,*} (\xinit)$ which leads to the contradiction we seek, given the initial assumption that the algorithm
terminates with a suboptimal cost value for the initial vertex. \QED
\end{proof}

The previous theorem states that when the $\PrcReplan$ procedure terminates at the beginning of the $N^{k}(\xinit)$th policy improvement step, it has already computed 
the optimal action and cost function value for $\xinit$. 
If the algorithm terminates after more than or equal to $N^{k}(\xinit)$ policy improvement steps, then optimality follows directly 
from Lemma~\ref{lemma:nondecreasing_set_vertices_of_optimal_costvalue}, since
the $\PrcReplan$ procedure is thus guaranteed to terminate after a finite number of policy improvement steps, owing to the properties of policy iteration
and the fact that the policy space is finite~\cite{Bertsekas2000}.

\begin{theorem}[Termination of $\PrcReplan$  Procedure after a Finite Number of Steps] 
Let $\graph{G}^{k} = (V^{k},E^{k})$ be the graph built at the end of $k$th iteration of the \emph{PI}-\AlgRRTsharp{} algorithm. 
Then, the $\PrcReplan$ procedure of the \emph{PI}-\AlgRRTsharp{} algorithm terminates after at most $(\overline{N}^{k}+2)$ policy improvement steps, where $\overline{N}^{k}  = \max_{ x \in V_{\mathrm{prom}}^{k} } N^{k}(x)$ and
$\overline{V}_{\mathrm{prom}}^{k} = \{ \overline{\PrcPredecessor}(\graph{G}^{k}, x) : x \in \Vprom^{k} \}$.
\end{theorem}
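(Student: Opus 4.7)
The plan is to leverage Lemma~\ref{lemma:nondecreasing_set_vertices_of_optimal_costvalue} as the workhorse and argue that once all promising vertices have reached their optimal values, the termination test of \PrcReplan{} must fire on the very next pass. First, I would observe that by Lemma~\ref{lemma:nondecreasing_set_vertices_of_optimal_costvalue}, after $i$ policy improvement steps we have $P^{k,i} \subseteq O^{k,i}$. By the definition of $\overline{N}^{k}$, every promising vertex $x \in V_{\mathrm{prom}}^{k}$ satisfies $N^{k}(x) \leq \overline{N}^{k}$, so $V_{\mathrm{prom}}^{k} \subseteq P^{k,\overline{N}^{k}}$. Combining these inclusions, after $\overline{N}^{k}$ policy improvement steps we have $V_{\mathrm{prom}}^{k} \subseteq O^{k,\overline{N}^{k}}$, i.e., every promising vertex has already attained its optimal action and its optimal cost-to-go value.

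Next, I would analyze the $(\overline{N}^{k}+1)$th policy improvement. Any vertex $x$ that gets touched by \PrcReplan{} lies in $B^{k,\overline{N}^{k}}$, which by construction is contained in $\overline{V}_{\mathrm{prom}}^{k}$, i.e., it is either promising itself or a predecessor of a promising vertex. Since every successor that matters for the Bellman backup of such an $x$ is already in $O^{k,\overline{N}^{k}}$ and therefore at its optimal value, Lemma~\ref{lemma:conditions_for_achieving_optimal_policy_and_cost} implies that the new policy choice $\mu^{k,\overline{N}^{k}+1}(x)$ still lies in $U^{k,*}(x)$ and the associated cost equals $J^{k,*}(x)$. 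In particular, no cost strictly decreases, so $\Delta J(x)=0$ for all $x \in B^{k,\overline{N}^{k}}$, causing the termination test $\max_{x \in B}\Delta J(x) \leq \epsilon$ to trigger at the end of this pass.

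The remaining bookkeeping accounts for the one extra iteration in the bound: one pass is spent ``locking in'' the optimal assignment on the $\overline{V}_{\mathrm{prom}}^{k}$ boundary whose Bellman value depended on the last promising vertex to stabilize, and one further pass is needed to actually observe zero updates and return. Put together, this yields termination after at most $\overline{N}^{k}+2$ policy improvement steps.

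The main obstacle I expect is being careful about vertices that are in $B^{k,i}$ only because they are predecessors of promising vertices, not because they are promising themselves: one must verify that such vertices cannot generate nonzero $\Delta J$ after all of $V_{\mathrm{prom}}^{k}$ has stabilized. This is exactly what Lemma~\ref{lemma:conditions_for_achieving_optimal_policy_and_cost} delivers, since their optimal successors lie in $S^{k,*}(x) \subseteq V_{\mathrm{prom}}^{k}$ and are already correct; the verification is thus reduced to invoking that lemma for each $x \in \overline{V}_{\mathrm{prom}}^{k} \setminus V_{\mathrm{prom}}^{k}$.
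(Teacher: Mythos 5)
Your overall skeleton matches the paper's: both arguments rest on Lemma~\ref{lemma:nondecreasing_set_vertices_of_optimal_costvalue} to conclude that $\Vprom^{k} = P^{k,\overline{N}^{k}} \subseteq O^{k,\overline{N}^{k}}$, then spend one more pass stabilizing the predecessors of promising vertices, and one final pass to detect no change. (The paper phrases the last step as a contradiction with the Bellman equation rather than a direct ``$\Delta J = 0$'' argument, but that is cosmetic.) However, there is a genuine gap in your second paragraph: the claim that $B^{k,\overline{N}^{k}}$ is contained in $\overline{V}_{\mathrm{prom}}^{k}$ ``by construction'' is false. The set $B^{k,i}$ is defined by the test $J_{\mu^{k,i}}(x) < J_{\mu^{k,i}}(\xinit)$, i.e., against the \emph{current} cost of $\xinit$, whereas $\Vprom^{k}$ is defined against the \emph{optimal} cost $J^{k,*}(\xinit)$. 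Since $N^{k}(\xinit)$ can equal $\overline{N}^{k}+1$, the value $J_{\mu^{k,\overline{N}^{k}}}(\xinit)$ may still strictly exceed $J^{k,*}(\xinit)$ at that stage, so $B^{k,\overline{N}^{k}}$ can contain non-promising vertices whose costs do strictly decrease in the next pass, defeating your claim that the termination test fires then. The paper closes exactly this hole: it first establishes $J_{\mu^{k,\overline{N}^{k}+1}}(\xinit) = J^{k,*}(\xinit)$ (via $N^{k}(\xinit) \leq \overline{N}^{k}+1$, because the optimal successor of $\xinit$ is promising), and only then deduces $B^{k,\overline{N}^{k}+1} \subseteq \overline{V}_{\mathrm{prom}}^{k}$ (indeed equality), which is why the bound is $\overline{N}^{k}+2$ and not $\overline{N}^{k}+1$. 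Your proposal never argues that $\xinit$ itself has converged, which is the linchpin of the containment you assert.

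A secondary, smaller issue: in your last paragraph you justify applying Lemma~\ref{lemma:conditions_for_achieving_optimal_policy_and_cost} to $x \in \overline{V}_{\mathrm{prom}}^{k} \setminus \Vprom^{k}$ by asserting $S^{k,*}(x) \subseteq \Vprom^{k}$. That inclusion is only proved (in Corollary~\ref{corollary1}) for \emph{promising} $x$; for a non-promising predecessor of a promising vertex, $J^{k,*}(x) \geq J^{k,*}(\xinit)$ and nothing forces its optimal successor to be promising. The paper sidesteps this by deriving the final contradiction directly from $J^{k,*}(x) > g(x,u) + J^{k,*}(\succx)$, which violates the Bellman equation regardless of whether the offending vertex's optimal successors are promising. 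You would need either that device or a separate argument for these boundary vertices.
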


\begin{proof}
Let us assume, on the contrary, that the $\PrcReplan$ procedure does not terminate at the end of the $(\overline{N}^{k}+2)th$ policy improvement step at the $k$th iteration of the PI-\AlgRRTsharp{} algorithm. 
This implies that there exists a point $x \in B^{k,\overline{N}^{k}+1}$ such that its cost function value is reduced, and its policy is updated at the end of the $(\overline{N}^{k}+2)th$ policy improvement step. 
Equivalently, there exists $u \in U^{k}(x)$ that yields $J_{\mu^{ k, \overline{N}^{k}+1} }(x) > g (x, u) + J_{\mu^{ k, \overline{N}^{k}+1} }( \succx )$ where $\succx \in S^{k}(x)$, $\mu^{ k, \overline{N}^{k}+1} \neq \mu^{k,\overline{N}^{k}+2}(x) = u$ and $\succx = f(x, u)$.
%We will check if the preceding inequality makes sense by analyzing the value of $J_{\mu^{ k, \overline{N}^{k}+1} }(x)$.
By definition, we have $P^{k,\overline{N}^{k}} = \Vprom^{k}$, which implies that $\Vprom^{k}\subseteq O^{k,\overline{N}^{k}} \subseteq O^{k,\overline{N}^{k}+1}$ due to Lemma~\ref{lemma:nondecreasing_set_vertices_of_optimal_costvalue} and Lemma~\ref{lemma:nondecreasing_set_vertices_of_optimal_cost_value}. 
For all $x \in \Vprom^{k}$, we have $J_{\mu^{k,\overline{N}^{k}+1}}(x) = J^{k,*}(x) < J^{k,*}(\xinit) \leq J_{\mu^{k,\overline{N}^{k}+1}}(\xinit)$, which implies that $\overline{\PrcPredecessor}(\graph{G},x) \in B^{k, \overline{N}^{k}+1}$. 
Therefore, $\overline{V}_{\mathrm{prom}}^{k} \subseteq B^{k,\overline{N}^{k}+1}$. 
Since $\Vprom^{k} \subseteq O^{k,\overline{N}^{k}}$ and $\overline{V}^{k}_{\mathrm{prom}}\subseteq B^{k,\overline{N}^{k}+1}$, it can also be shown, similarly to 
Lemma~\ref{lemma:evolution_of_vertices_along_optimal_paths_under_policy_improvement}, that all vertices of $\overline{V}^{k}_{\mathrm{prom}}$ achieve their 
optimal cost values and their optimal policies after the $(\overline{N}^{k}+1)th$ policy improvement step.
As a result, $ J_{\mu^{k,\overline{N}^{k}+1}} (x) = J^{k,*}(x), ~ \mu^{k,\overline{N}^{k}+1}(x) \subseteq U^{k,*}(x)$ for all $x \in \overline{V}_{\mathrm{prom}}^{k}$.
  
Next, note that for the successor vertex of $\xinit$ along the optimal path between $\xinit$ and the goal region we have that $N^{k}(\succx) = N^{k}(\xinit) - 1 \leq \overline{N}^{k}$ since $\succx \in \Vprom^{k}$. This implies that $N^{k}(\xinit) \leq \overline{N}^{k} + 1$, and therefore, from Lemma~\ref{lemma:nondecreasing_set_vertices_of_optimal_costvalue}, we have that $J_{\mu^{k,\overline{N}^{k}+1}} (\xinit) = J^{k,*}(\xinit)$. 
Recall now that, for all $x \in V^{k}$ with $\overline{\PrcPredecessor}(\graph{G}^{k}, x) \in B^{k,\overline{N}^{k}+1} $, we have that $J_{{\mu}^{k,\overline{N}^{k}+1}}(x) < J_{{\mu}^{k,\overline{N}^{k}+1}} (\xinit)$. 
Since $J_{\mu^{k,\overline{N}^{k}+1}} (\xinit) = J^{k,*}(\xinit)$, the following expression holds:
$$
J^{k,*}(x) \leq J_{{\mu}^{k,\overline{N}^{k}+1}}(x) < J_{{\mu}^{k,\overline{N}^{k}+1}} (\xinit) = J^{k,*}(\xinit) \quad \Rightarrow \quad
J^{k,*}(x) < J^{k,*}(\xinit)
$$
Therefore, $x \in \Vprom^{k}$ and $\overline{\PrcPredecessor}(\graph{G}^{k}, x) \in \overline{V}^{k}_{\mathrm{prom}}$ which implies that  $B^{k,\overline{N}^{k}+1} \subseteq \overline{V}^{k}_{\mathrm{prom}}$. 
From the two preceding results, it follows that $B^{k,\overline{N}^{k}+1} = \overline{V}^{k}_{\mathrm{prom}}$. 

Let $x \in B^{\overline{N}^{k}+1} = \overline{V}^{k}_{\mathrm{prom}}$, whose policy is updated 
during the $(\overline{N}^{k}+2)th$ policy improvement step.
We therefore have that $J^{k,*}(x) = J_{\mu^{ k, \overline{N}^{k}+1} }(x) > g (x, u) + J_{\mu^{ k, \overline{N}^{k}+1} }( \succx ) = g (x, u) +  J^{k,*}( \succx )$. 
This yields $J^{k,*}(x) > g (x, u) +  J^{k,*}( \succx )$, which contradicts (\ref{eqn:deterministic_system:bellmans_equation}), thus completing the proof. \QED
\end{proof}

\begin{theorem}[Asymptotic Optimality of PI-RRT$^\#$]
Let $\graph{G}^{k} = (V^{k},E^{k})$ be the graph built at the end of the $k$th iteration of the PI-\AlgRRTsharp{} algorithm and let
$N^{k}$ is maximum number of policy improvement steps performed at the $k$ iteration.
As $k \rightarrow \infty$, the policy $\mu^{k,N^{k}}(\xinit)$ and its corresponding cost function $J_{\mu^{k,N^{k}}}(\xinit)$,  
converge to the optimal policy $\mu^{*}(\xinit)$ and corresponding optimal cost function $J_{\mu^{*}}(\xinit)$ with probability one.
\end{theorem}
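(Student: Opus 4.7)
The plan is to reduce the claim to two already-established ingredients: (i) the per-iteration optimality result just proved for the $\PrcReplan$ procedure, and (ii) the asymptotic optimality of the underlying $\AlgRRG$ graph construction established in \cite{karaman2011sampling}. The PI-$\AlgRRTsharp{}$ algorithm builds the same random geometric graph as $\AlgRRG$ on top of which it runs policy iteration, so the sample-generation process and connection-radius schedule inherit all probabilistic guarantees of $\AlgRRG$.

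First, I would invoke the Theorem on optimality of each iteration to conclude that, for each fixed $k$, when $\PrcReplan$ terminates after $N^{k}$ policy improvement steps on the graph $\graph{G}^{k}$, we have the exact equalities
\begin{equation*}
\mu^{k,N^{k}}(\xinit) \in U^{k,*}(\xinit), \qquad J_{\mu^{k,N^{k}}}(\xinit) = J^{k,*}(\xinit),
\end{equation*}
where $J^{k,*}(\xinit)$ denotes the optimal cost-to-come in the discrete graph $\graph{G}^{k}$. The finiteness of $N^k$ is guaranteed by the preceding termination theorem. Thus the problem reduces entirely to analyzing the random sequence $\{J^{k,*}(\xinit)\}_{k\geq 1}$ as $k\to\infty$.

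Next, I would invoke the asymptotic optimality result for $\AlgRRG$: since the $\PrcExtend$ procedure of PI-$\AlgRRTsharp{}$ adds vertices and edges exactly according to the $\AlgRRG$ connection rule (with the critical connection radius scaled as in \cite{karaman2011sampling}), the sequence of graphs $\{\graph{G}^{k}\}$ is itself a valid $\AlgRRG$ sequence. Hence, as $k\to\infty$, the lowest-cost path encoded in $\graph{G}^{k}$ between $\xinit$ and $\Xgoal$ converges almost surely to the optimal continuous-space path cost $J^{*}(\xinit)$. Combined with the equality from the first step, this yields
\begin{equation*}
\lim_{k\to\infty} J_{\mu^{k,N^{k}}}(\xinit) = \lim_{k\to\infty} J^{k,*}(\xinit) = J^{*}(\xinit), \quad \text{with probability one},
\end{equation*}
and the corresponding optimal action attained at $\xinit$ converges almost surely to $\mu^{*}(\xinit)$ by the continuity of the arg-min on the finite (per-iteration) action set $U^{k}(\xinit)$.

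The main obstacle is a conceptual rather than a technical one: one must verify that adopting policy iteration in place of the Gauss--Seidel value iteration of $\AlgRRTsharp{}$, and restricting Bellman updates to the promising set $B^{k,i}$ rather than to all of $V^{k}$, does not break the equivalence between the per-iteration output and the graph-optimal cost $J^{k,*}(\xinit)$. This is precisely what the chain of Lemmas~\ref{lemma:nondecreasing_set_vertices_of_optimal_cost_value}--\ref{lemma:nondecreasing_set_vertices_of_optimal_costvalue} and the two preceding theorems already establish, so once they are in place the asymptotic argument collapses to a direct appeal to the $\AlgRRG$ asymptotic optimality theorem. No additional probabilistic analysis beyond that of \cite{karaman2011sampling} is required.
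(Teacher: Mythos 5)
Your proposal matches the paper's own argument essentially step for step: both reduce the claim to the per-iteration optimality of $\PrcReplan$ (giving $J_{\mu^{k,N^{k}}}(\xinit) = J^{k,*}(\xinit)$ and $\mu^{k,N^{k}}(\xinit) = \mu^{k,*}(\xinit)$ on each $\graph{G}^{k}$) and then appeal to the almost-sure asymptotic optimality of the underlying \AlgRRG{} graph sequence from \cite{karaman2011sampling} to pass to the limit. No substantive difference in approach.
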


\begin{proof}
The graph $\graph{G}^{k} = (V^{k},E^{k})$ is constructed by the \AlgRRG{} algorithm at the beginning of $k$th iteration. In the PI-\AlgRRTsharp{} algorithm, the optimal cost function value of $\xinit$ with respect to $\graph{G}^{k}$ is computed during the $\PrcReplan$ procedure at the end of $k$th iteration, that is, 
$J_{\mu^{k,N^{k}}} (\xinit) = J^{k,*} (\xinit)$ and $\mu^{k,N^{k}}(\xinit) = \mu^{k,*}(\xinit)$. 
Since the \AlgRRG{} algorithm is asymptotically optimal with probability one, $\graph{G}^{k}$ will encode, almost surely, 
the optimal  path between $\xinit$ and goal region as $k \rightarrow \infty$.
This implies that $J_{\mu^{k,N^{k}}} (\xinit) = J^{k,*} (\xinit) \to J^{*} (\xinit) $ and $\mu^{k,N^{k}} (\xinit) = \mu^{k,*} (\xinit) \to \mu^{*} (\xinit)$ with probability one. \QED
%Therefore, the PI-\AlgRRTsharp{} algorithm is also asymptotically optimal with probability one.
\end{proof}

\newpage

\section{Numerical Simulations}

We implemented both the baseline \AlgRRTsharp{} and PI-\AlgRRTsharp{} algorithms in MATLAB and performed Monte Carlo simulations on shortest path planning problems in two different 2D environments, namely, sparse and highly cluttered environments. 
The goal was to find the shortest path that minimizes the Euclidean distance from an initial point to a goal point. 
The initial and goal points are shown in yellow and dark blue squares in the figures below, respectively. 
The obstacles are shown in red and the best path computed during each iteration is shown in yellow.

The results were averaged over 100 trials and each trial was run for 10,000 iterations. 
No vertex rejection rule is applied during the extension procedure. 
We then computed the total time required to complete a trial and measured the time spent on the non-planning (sampling, extension, etc.) and the planning-related procedures of the algorithms, separately.  
The growth of the tree in each case is shown in Figure~\ref{figure:sim_d2_pt2_rrtsharp_v0_pi_iterations}. 
At each iteration, a subset of promising vertices is determined during the policy evaluation step and policy improvement is performed only for these vertices. 
The promising vertices are shown in magenta in Figure~\ref{figure:sim_d2_pt2_rrtsharp_v0_pi_iterations}.

\begin{figure*}[!ht]

\centering
	\mbox{
	% previous size 257
    \renewcommand{\thesubfigure}{(a)} \subfigure[]{\scalebox{0.29}{\includegraphics[trim = 4.0cm 6.937cm 3.587cm 7.0cm, clip =
          true]{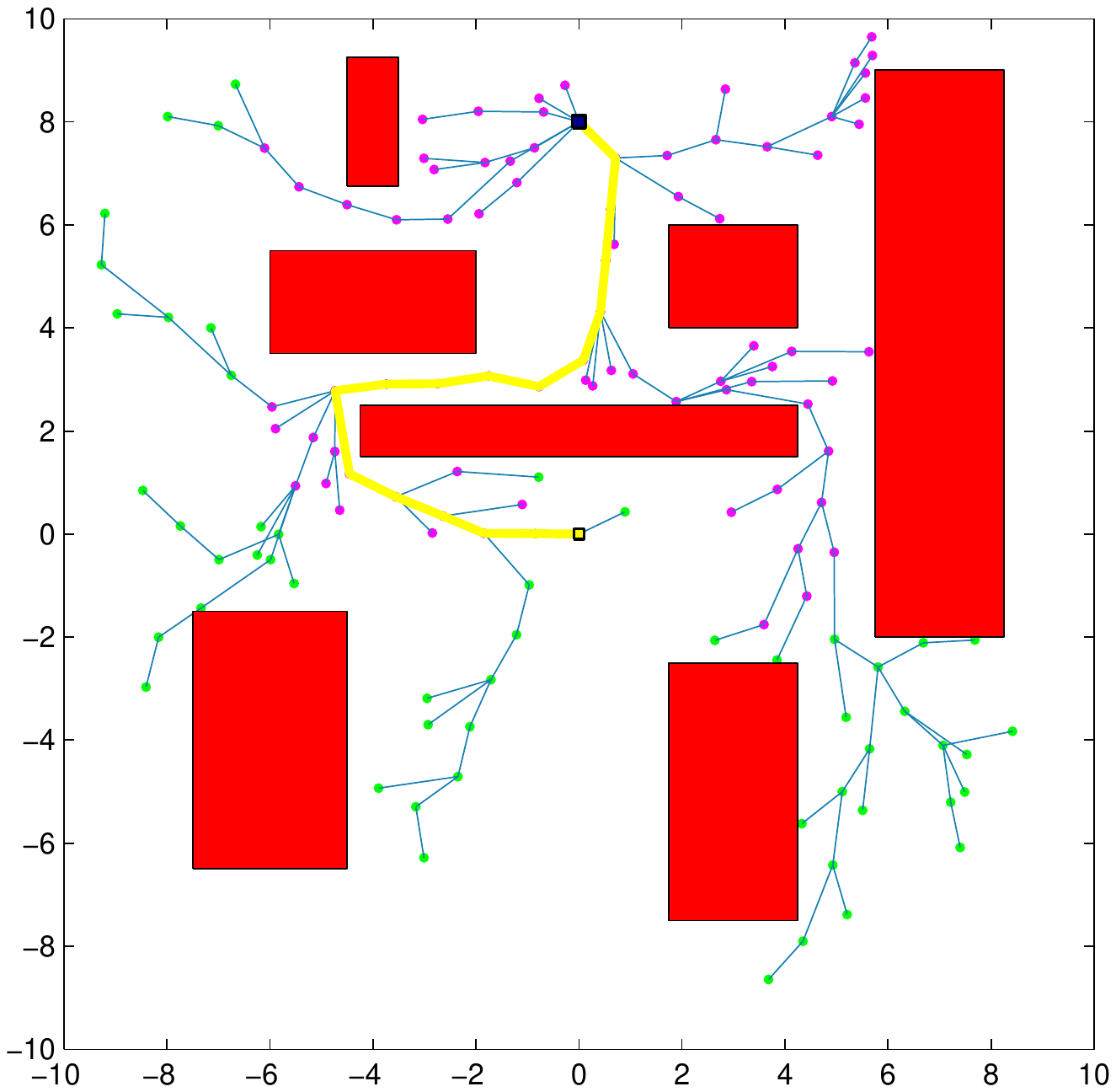}} \label{figure:pt2_rrtsharp_v0_pi_it200r}}
    \renewcommand{\thesubfigure}{(b)} \subfigure[]{\scalebox{0.29}{\includegraphics[trim = 4.0cm 6.937cm 3.587cm 7.0cm, clip =
          true]{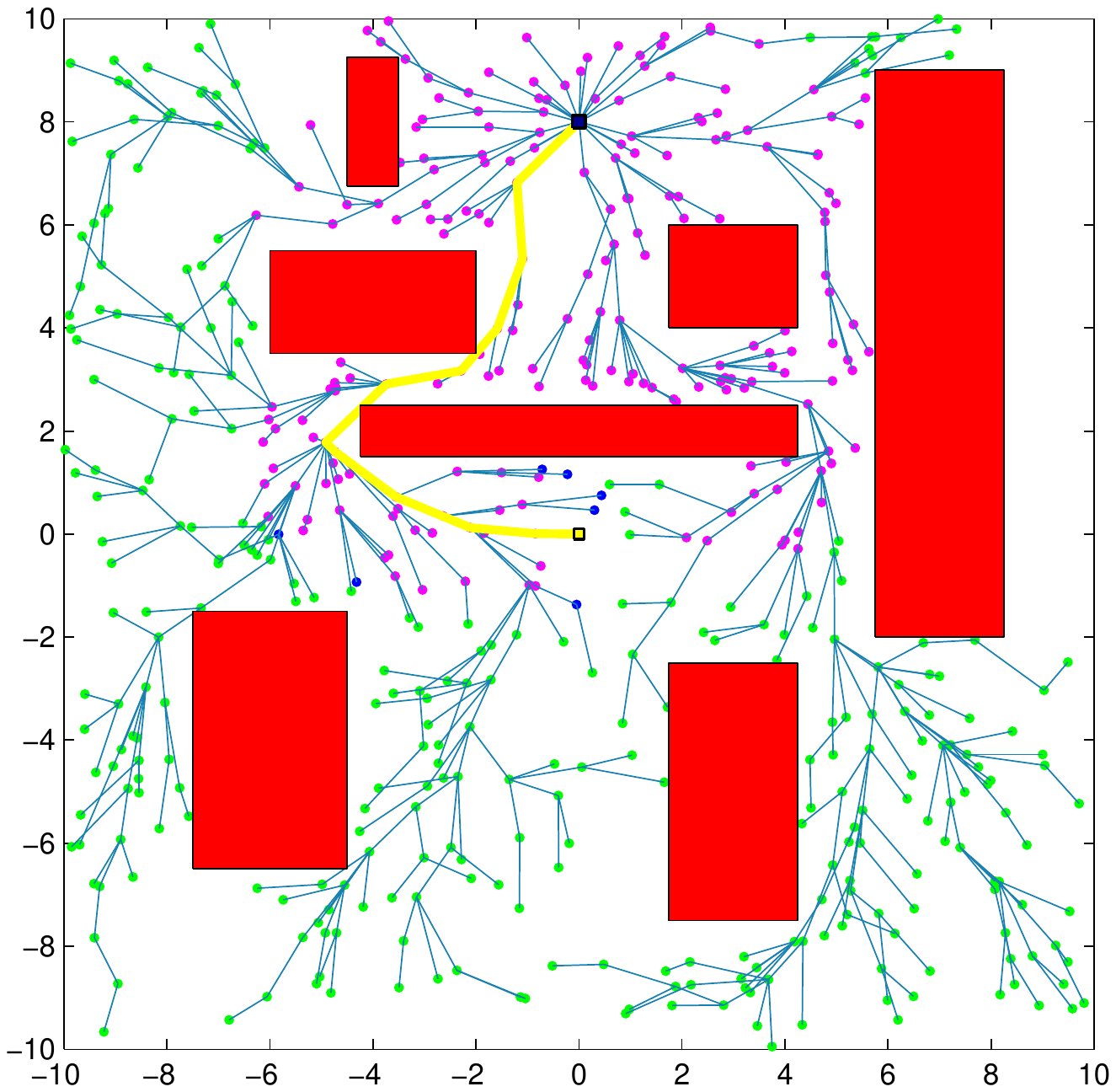}} \label{figure:pt2_rrtsharp_v0_pi_it600r}}
    \renewcommand{\thesubfigure}{(c)} \subfigure[]{\scalebox{0.29}{\includegraphics[trim = 4.0cm 6.937cm 3.587cm 7.0cm, clip =
          true]{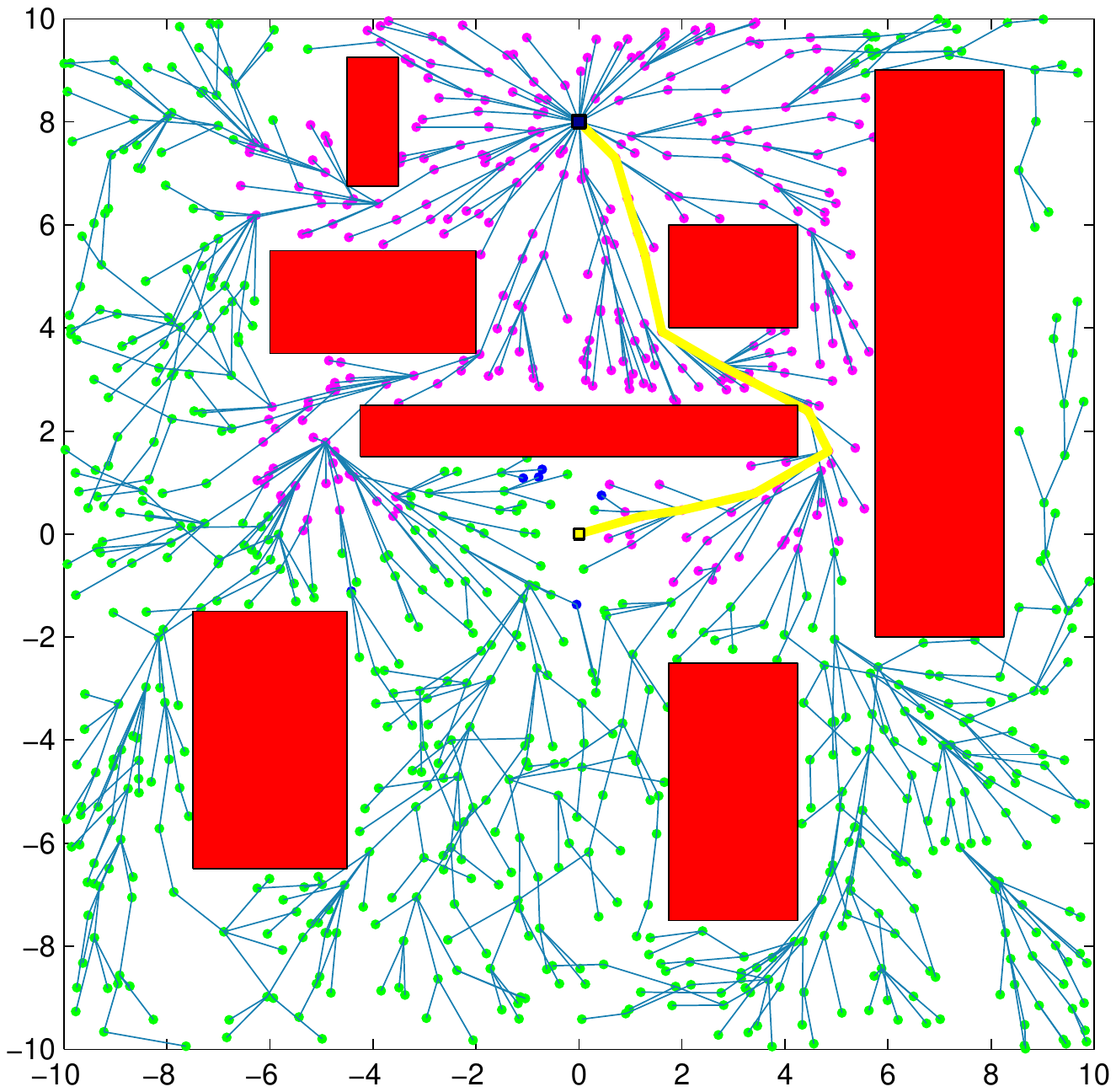}}\label{figure:pt2_rrtsharp_v0_pi_it1000r}}
    \renewcommand{\thesubfigure}{(d)} \subfigure[]{\scalebox{0.29}{\includegraphics[trim = 4.0cm 6.937cm 3.587cm 7.0cm, clip =
          true]{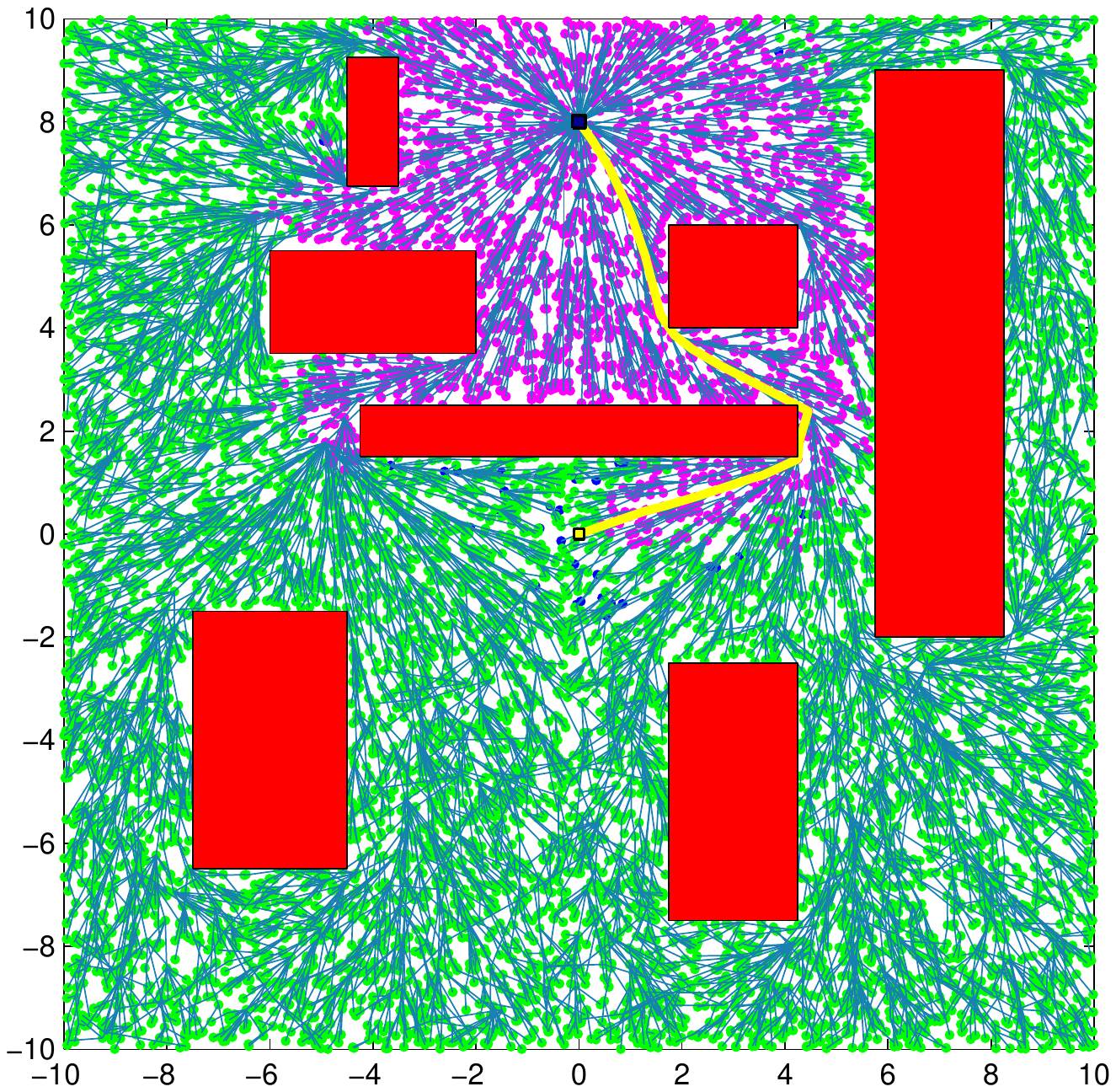}}\label{figure:pt2_rrtsharp_v0_pi_it10000r}}
	 }\vspace*{-4.5mm}
	\mbox{
    \renewcommand{\thesubfigure}{(e)} \subfigure[]{\scalebox{0.29}{\includegraphics[trim = 4.0cm 6.937cm 3.587cm 7.0cm, clip =
          true]{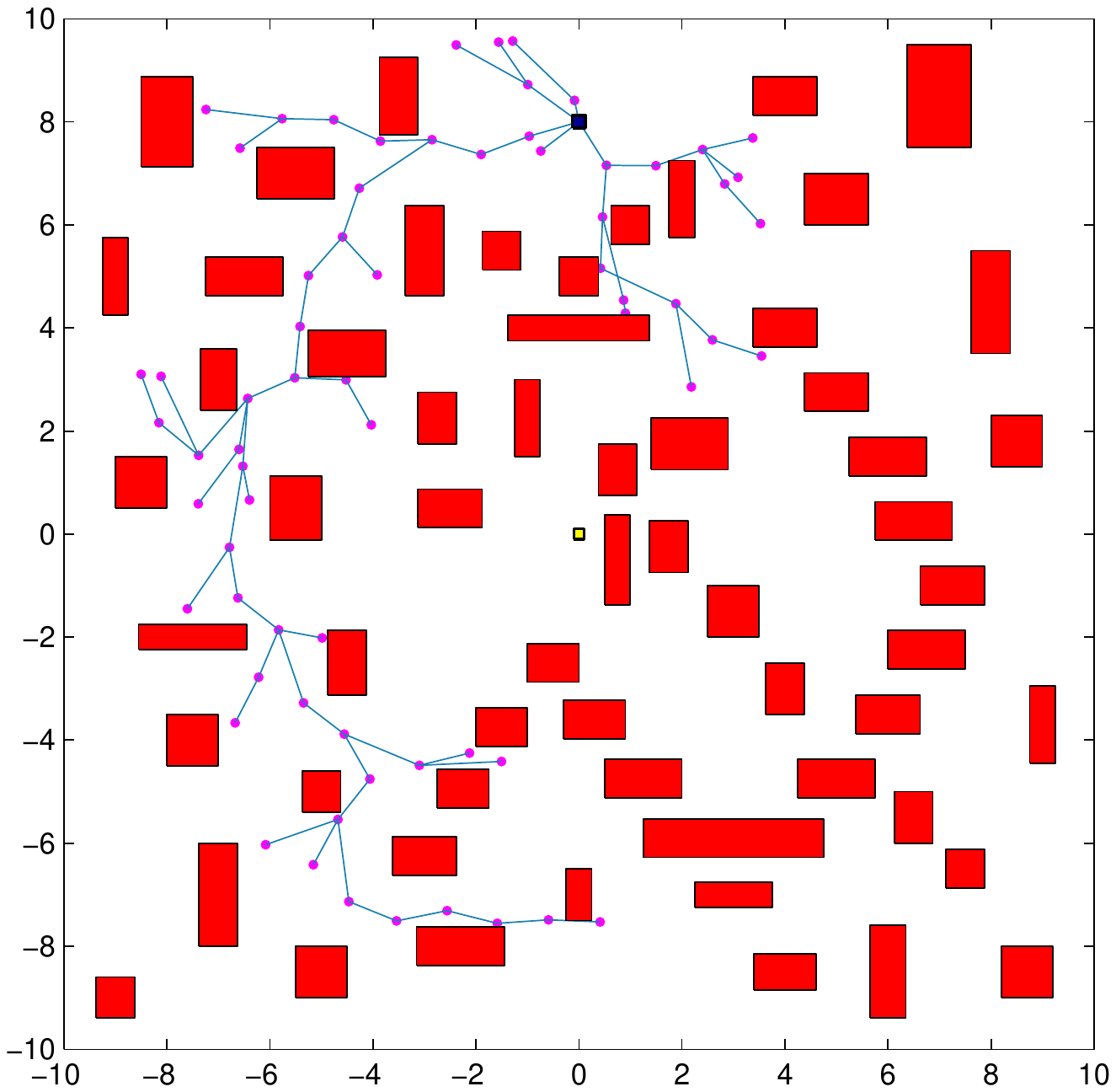}} \label{figure:pt3_rrtsharp_v0_pi_it200r}}
    \renewcommand{\thesubfigure}{(f)} \subfigure[]{\scalebox{0.29}{\includegraphics[trim = 4.0cm 6.937cm 3.587cm 7.0cm, clip =
          true]{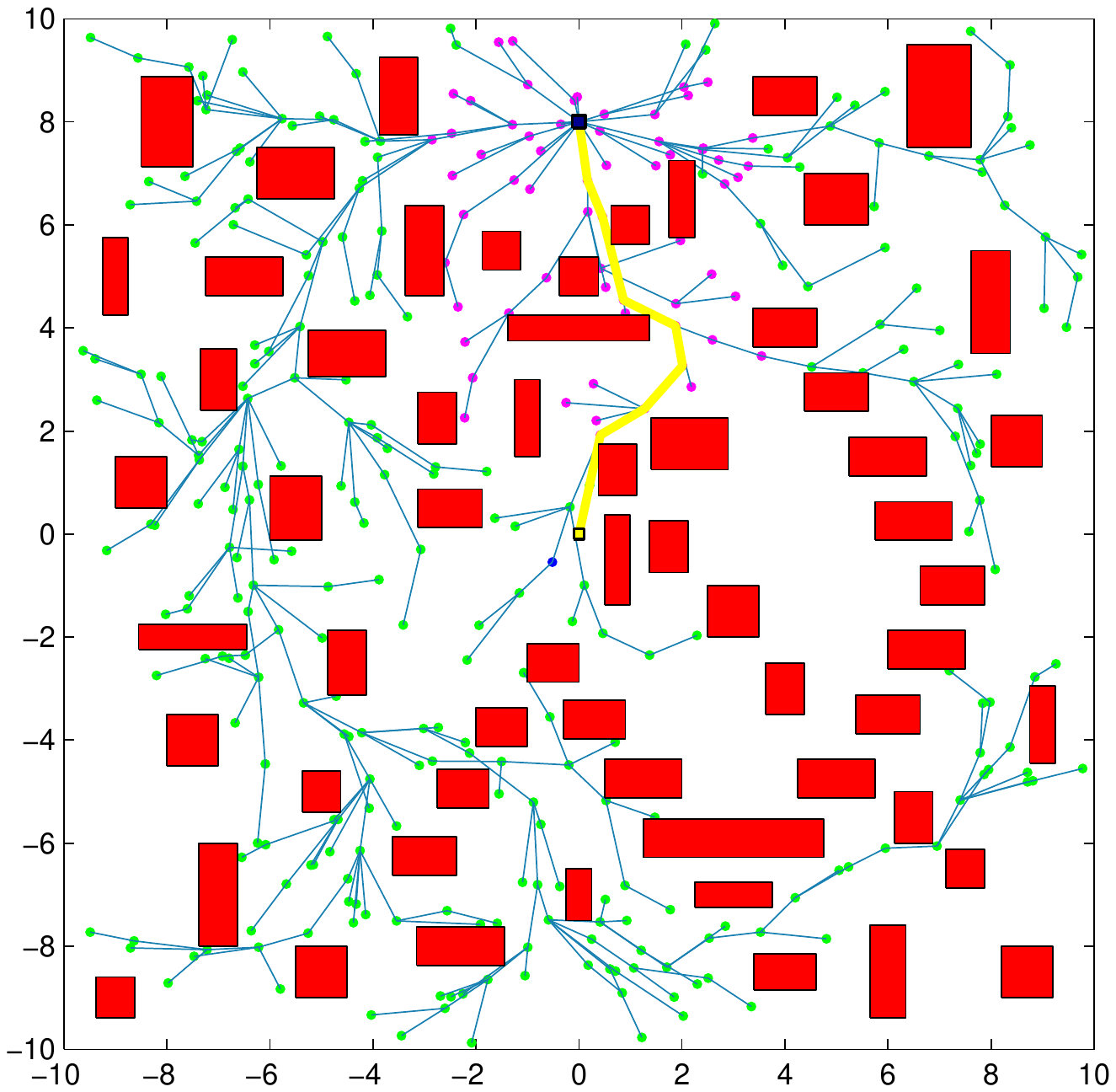}} \label{figure:pt3_rrtsharp_v0_pi_it600r}}
    \renewcommand{\thesubfigure}{(g)} \subfigure[]{\scalebox{0.29}{\includegraphics[trim = 4.0cm 6.937cm 3.587cm 7.0cm, clip =
          true]{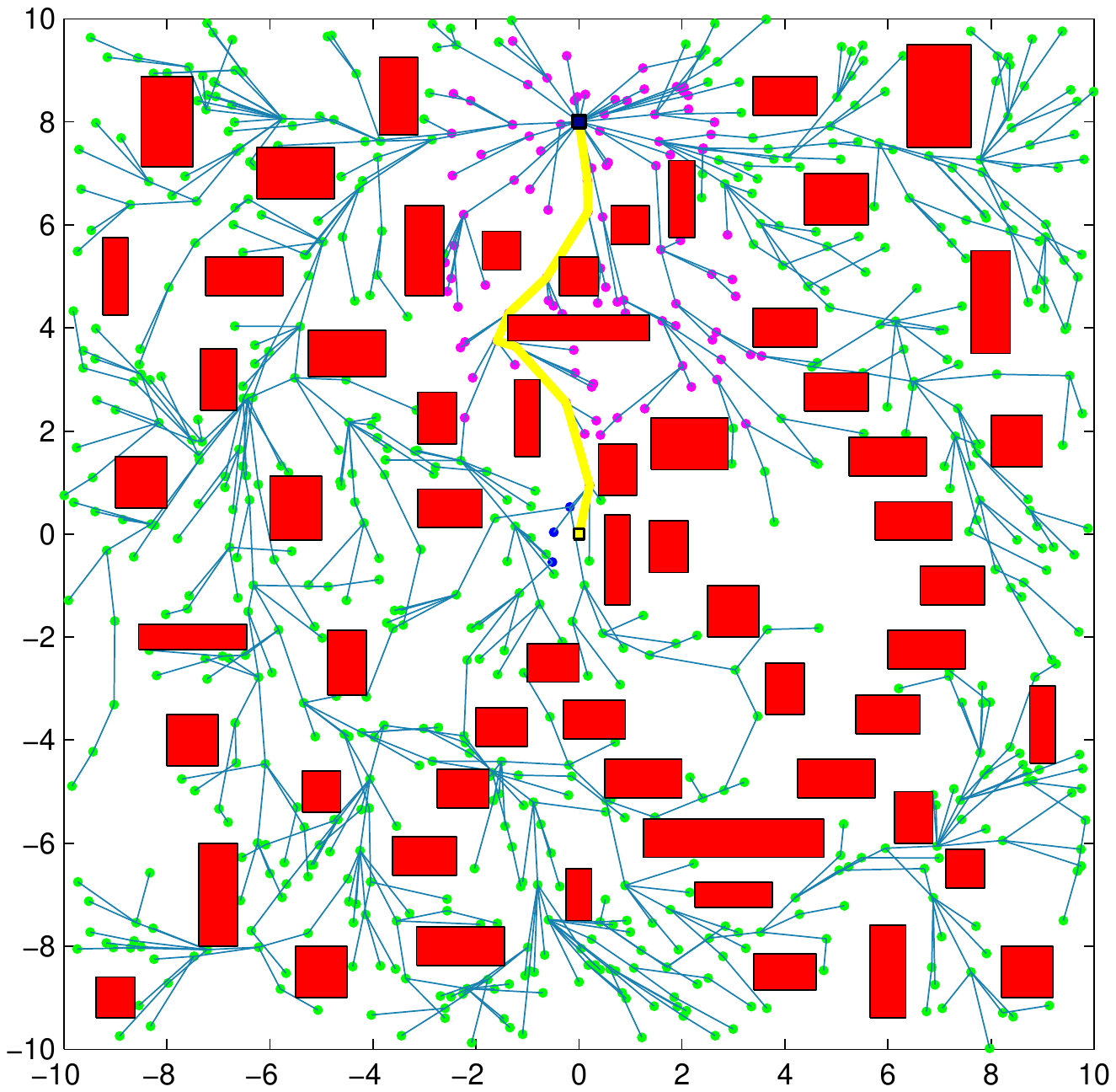}} \label{figure:pt3_rrtsharp_v0_pi_it1000r}}
    \renewcommand{\thesubfigure}{(h)} \subfigure[]{\scalebox{0.29}{\includegraphics[trim = 4.0cm 6.937cm 3.587cm 7.0cm, clip =
          true]{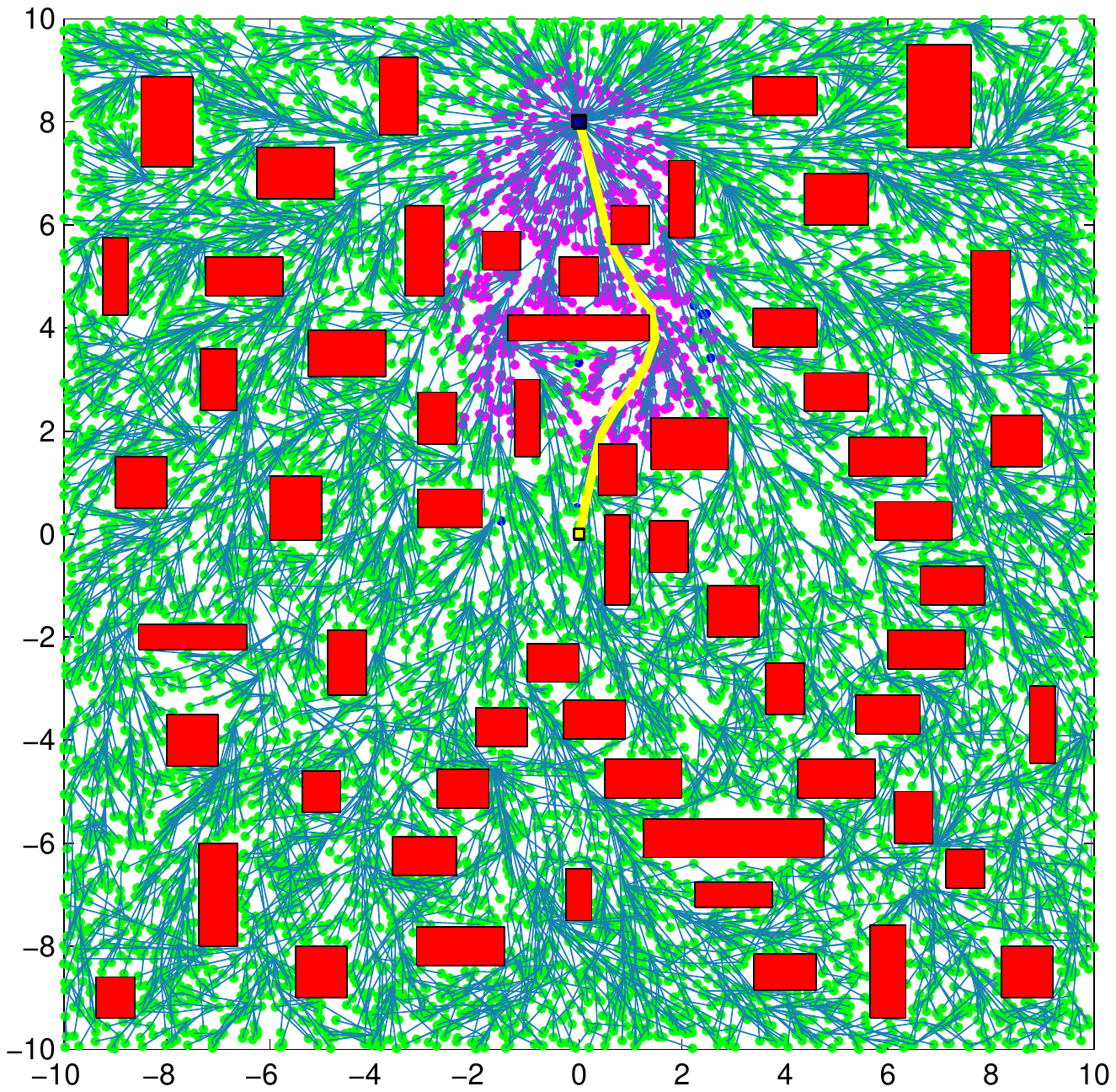}} \label{figure:pt3_rrtsharp_v0_pi_it10000r}}
	 }\vspace*{-4mm}
%	\mbox{
%    \renewcommand{\thesubfigure}{(c)} \subfigure[]{\scalebox{0.257}{\includegraphics[trim = 4.0cm 6.937cm 3.587cm 7.0cm, clip =
%          true]{fig_pt2_rrtsharp_v0_it10000r.pdf}} \label{figure:pt2_rrtsharp_v0_it24999}}
%    \renewcommand{\thesubfigure}{(f)} \subfigure[]{\scalebox{0.257}{\includegraphics[trim = 4.0cm 6.937cm 3.587cm 7.0cm, clip =
%          true]{fig_pt2_rrtsharp_v1_it10000r.pdf}} \label{figure:pt2_rrtsharp_v1_it24999}}
%    \renewcommand{\thesubfigure}{(i)} \subfigure[]{\scalebox{0.257}{\includegraphics[trim = 4.0cm 6.937cm 3.587cm 7.0cm, clip =
%          true]{fig_pt2_rrtsharp_v2_it10000r.pdf}} \label{figure:pt2_rrtsharp_v2_it24999}}
%    \renewcommand{\thesubfigure}{(l)} \subfigure[]{\scalebox{0.257}{\includegraphics[trim = 4.0cm 6.937cm 3.587cm 7.0cm, clip =
%          true]{fig_pt2_rrtsharp_v3_it10000r.pdf}} \label{figure:pt2_rrtsharp_v3_it24999}}
% }\vspace*{-4mm}
    \caption{The evolution of the tree computed by PI-\AlgRRTsharp{} algorithm is shown in  \subref{figure:pt2_rrtsharp_v0_pi_it200r}-\subref{figure:pt2_rrtsharp_v0_pi_it10000r} for the problem with less cluttered environment, and \subref{figure:pt3_rrtsharp_v0_pi_it200r}-\subref{figure:pt3_rrtsharp_v0_pi_it10000r} for the problem with cluttered environment. 
    The configuration of the trees \subref{figure:pt2_rrtsharp_v0_pi_it200r}, \subref{figure:pt3_rrtsharp_v0_pi_it200r} is at 200 iterations, \subref{figure:pt2_rrtsharp_v0_pi_it600r}, \subref{figure:pt3_rrtsharp_v0_pi_it600r} is at 600 iterations, \subref{figure:pt2_rrtsharp_v0_pi_it1000r}, \subref{figure:pt3_rrtsharp_v0_pi_it1000r} is at 1,000 iterations,
     and \subref{figure:pt2_rrtsharp_v0_pi_it10000r}, \subref{figure:pt3_rrtsharp_v0_pi_it10000r} is at 10,000 iterations.
    } \label{figure:sim_d2_pt2_rrtsharp_v0_pi_iterations}

\end{figure*}

For the first problem, the average time spent for non-planning related procedures in the \AlgRRTsharp{} and PI-\AlgRRTsharp{} algorithms are shown in blue and red colors, respectively, in Figure~\ref{figures:non_planning_time_pt2}. 
As seen from these figures, PI-\AlgRRTsharp{} is slightly faster than the \AlgRRTsharp{} algorithm, especially when adding a new vertex to the graph. 
Since there is no priority queue in the PI-\AlgRRTsharp{} algorithm, it is much cheaper to include a new vertex and there is no need for vertex ordering.

{
\setlength{\intextsep}{4pt}
\begin{figure}[htp]
\centering
% l b r t %4.0cm 3.0cm 4.0cm 3.0cm
\scalebox{0.45}{\includegraphics[trim = 1.8cm 6.9cm 2.2cm 7.1cm, clip = true]{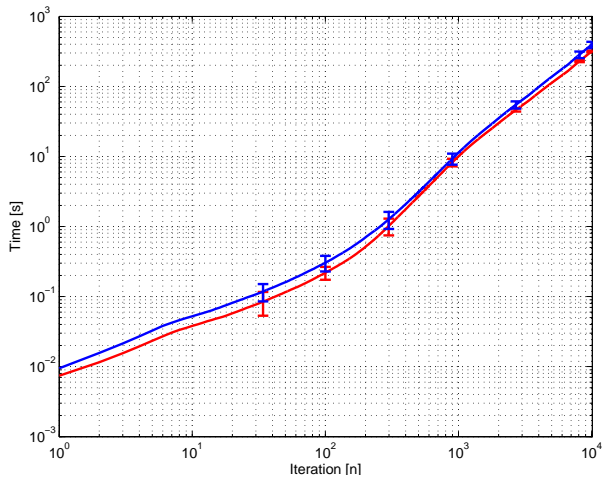}}
\caption{The time required for non-planning procedures to complete a certain number of iterations for the first problem set. The time curve for \AlgRRTsharp{} and PI-\AlgRRTsharp{} are shown in blue and red, respectively. Vertical bars denote standard deviation averaged over 100 trials.} \label{figures:non_planning_time_pt2}
%\caption{The change in the cost of the best solutions computed by the \AlgRRTstar{}, \AlgRRTsharp{} and variant algorithms. Vertical bars and filled circles denote standard deviation and completion time averaged over 100 trials, respectively.} \label{figure:time_cost_mean_d6_all}
\end{figure}
}

For the first problem, the average time spent for planning related procedures for the \AlgRRTsharp{} and the PI-\AlgRRTsharp{} algorithms are shown in blue and red colors, respectively, in Figure~\ref{figures:planning_time_pt2}. 
As seen ion those figures, the relation between time and iteration is linear when the number of iterations becomes large in the log-log scale plot, which implies a polynomial relationship, i.e., $t(n) = c n^{\alpha}$. 
One can find these parameters by using a least-square minimization based on the measured data for iterations between 100 to 10,000. 
These parameters can be computed as $c_{0} = 6.4322 \times 10^{-5}, \alpha_{0} = 1.2925$ for \AlgRRTsharp{} and $c_{\mathrm{pi}} = 1.0672 \times 10^{-6}, \alpha_{\mathrm{pi}} = 2.214$ for the PI-\AlgRRTsharp{}. 
The fitted time-iteration lines (dashed) for the \AlgRRTsharp{} and the PI-\AlgRRTsharp{} are shown in magenta and green colors, respectively. 
In our implementation, we uses one processor to perform policy improvement due to simplicity. 
However, as mentioned earlier, the policy improvement step can be done in parallel. 
One can divide the set of promising vertices into disjoint sets and assign each of them to a different processor.

{
\setlength{\intextsep}{4pt}
\begin{figure}[htp]
\centering
% l b r t %4.0cm 3.0cm 4.0cm 3.0cm
\scalebox{0.45}{\includegraphics[trim = 1.8cm 6.9cm 2.2cm 7.1cm, clip = true]{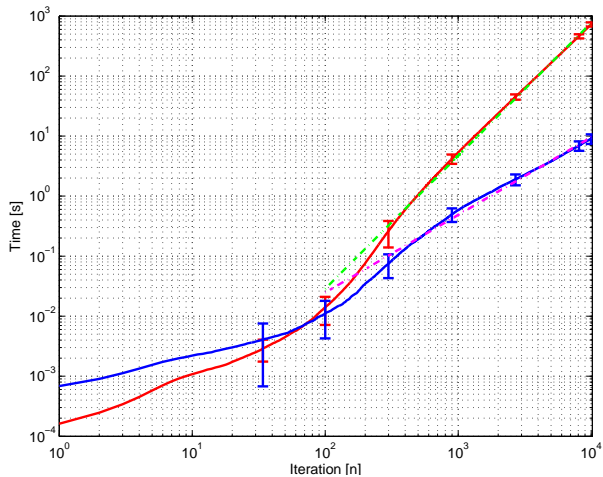}}
\caption{The time required for planning procedures to complete a certain number of iterations for the first problem set. The time curve for \AlgRRTsharp{} and PI-\AlgRRTsharp{} are shown in blue and red, respectively. Vertical bars denote standard deviation averaged over 100 trials.} \label{figures:planning_time_pt2}
%\caption{The change in the cost of the best solutions computed by the \AlgRRTstar{}, \AlgRRTsharp{} and variant algorithms. Vertical bars and filled circles denote standard deviation and completion time averaged over 100 trials, respectively.} \label{figure:time_cost_mean_d6_all}
\end{figure}
}

Let $n_{\mathrm{p}}$ denote the number of processors, and let $N$ denote the computational load per processor, i.e, $N = n/n_{\mathrm{p}}$, where $n$ is the iteration number which can be considered as an upper bound on the number of promising vertices. 
Simple calculation shows that the load per processor needs to satisfy the following relationship for the PI-\AlgRRTsharp{} algorithm in order to outperform the 
baseline \AlgRRTsharp{} algorithm for faster planning.
$$
N = \frac{n}{n_{\mathrm{p}}} > \frac{c_{0}}{c_{\mathrm{pi}}} n^{1 + \alpha_{0} - \alpha_{\mathrm{pi}}}.
$$
For the first problem, based on these empirical data, the load per processor versus iteration limit is $N > 60.27 n^{0.078562}$. 
This line is plotted in Figure~\ref{figure:load_curve_pt2}. 
For example, the load per processor needs to be smaller than 124.27, so that each processor should not be assigned more than 124.27 vertices for policy improvement. 
This implies that the number of processors needs to be greater than 80.47 during the 10,000$th$ iteration.

{
\setlength{\intextsep}{4pt}
\begin{figure}[htp]
\centering
% l b r t %4.0cm 3.0cm 4.0cm 3.0cm
\scalebox{0.45}{\includegraphics[trim = 1.8cm 6.9cm 2.2cm 7.1cm, clip = true]{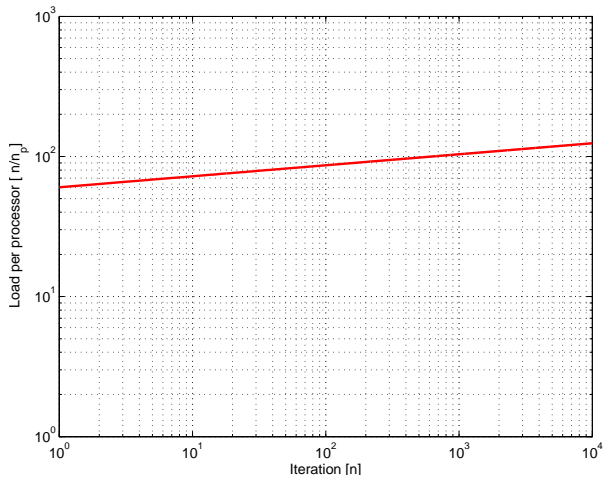}}
\caption{The number of vertices assigned per processors needs to be smaller than the load per processor line in order to outperform \AlgRRTsharp{} algorithm.} \label{figure:load_curve_pt2}

\end{figure}
}

From the previous simple analysis it follows that the PI-\AlgRRTsharp{} algorithm can be a better choice than \AlgRRTsharp{} algorithm for planning problems in high-dimensional search spaces. 
In high-dimensional search spaces, one needs to run planning algorithms for a large number of iterations in order to explore the search space densely and see a significant improvement in the computed solutions. 
This requirement induces a bottleneck on the \AlgRRTsharp{} algorithm and all similar VI-based algorithms since the re-planning procedure is performed sequentially and requires ordering of vertices. 
Therefore, this operation may take a long time, as the number of vertices increases significantly. 
On the other hand, the PI-\AlgRRTsharp{} algorithm does not require any ordering of the vertices, and one can keep re-planning tractable by employing more processors (e.g., spawning more threads) as needed, in order to meet the desired load per processor requirement. 
Given the current advancement in parallel computing technologies, such as GPUs, a well-designed parallel implementation of the PI-\AlgRRTsharp{} may yield significant real-time execution performance improvement for some problems that are known to be very challenging to handle with existing VI-based probabilistic algorithms. 

The same analysis was carried out for the second problem and the results are shown in Figure~\ref{figure:non_planning_time_pt3}, \ref{figure:planning_time_pt3} and \ref{figure:load_curve_pt3}.

{
\setlength{\intextsep}{4pt}
\begin{figure}[htp]
\centering
% l b r t %4.0cm 3.0cm 4.0cm 3.0cm
\scalebox{0.45}{\includegraphics[trim = 1.8cm 6.9cm 2.2cm 7.1cm, clip = true]{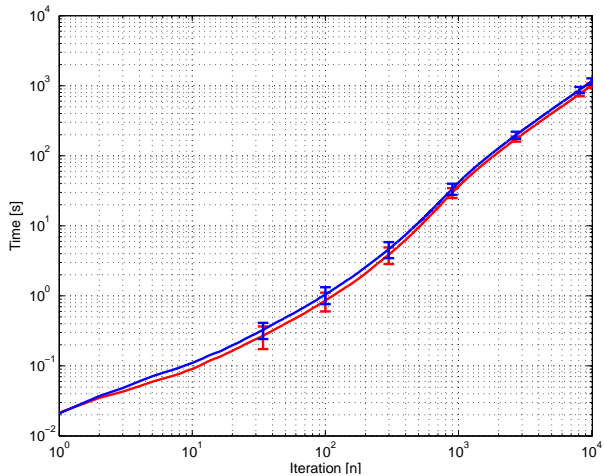}}
\caption{The time required for non-planning procedures to complete a certain number of iterations for the second problem set. The time curve for \AlgRRTsharp{} and PI-\AlgRRTsharp{} are shown in blue and red, respectively. Vertical bars denote standard deviation averaged over 100 trials.} \label{figure:non_planning_time_pt3}
%\caption{The change in the cost of the best solutions computed by the \AlgRRTstar{}, \AlgRRTsharp{} and variant algorithms. Vertical bars and filled circles denote standard deviation and completion time averaged over 100 trials, respectively.} \label{figure:time_cost_mean_d6_all}
\end{figure}
}

{
\setlength{\intextsep}{4pt}
\begin{figure}[htp]
\centering
% l b r t %4.0cm 3.0cm 4.0cm 3.0cm
\scalebox{0.45}{\includegraphics[trim = 1.8cm 6.9cm 2.2cm 7.1cm, clip = true]{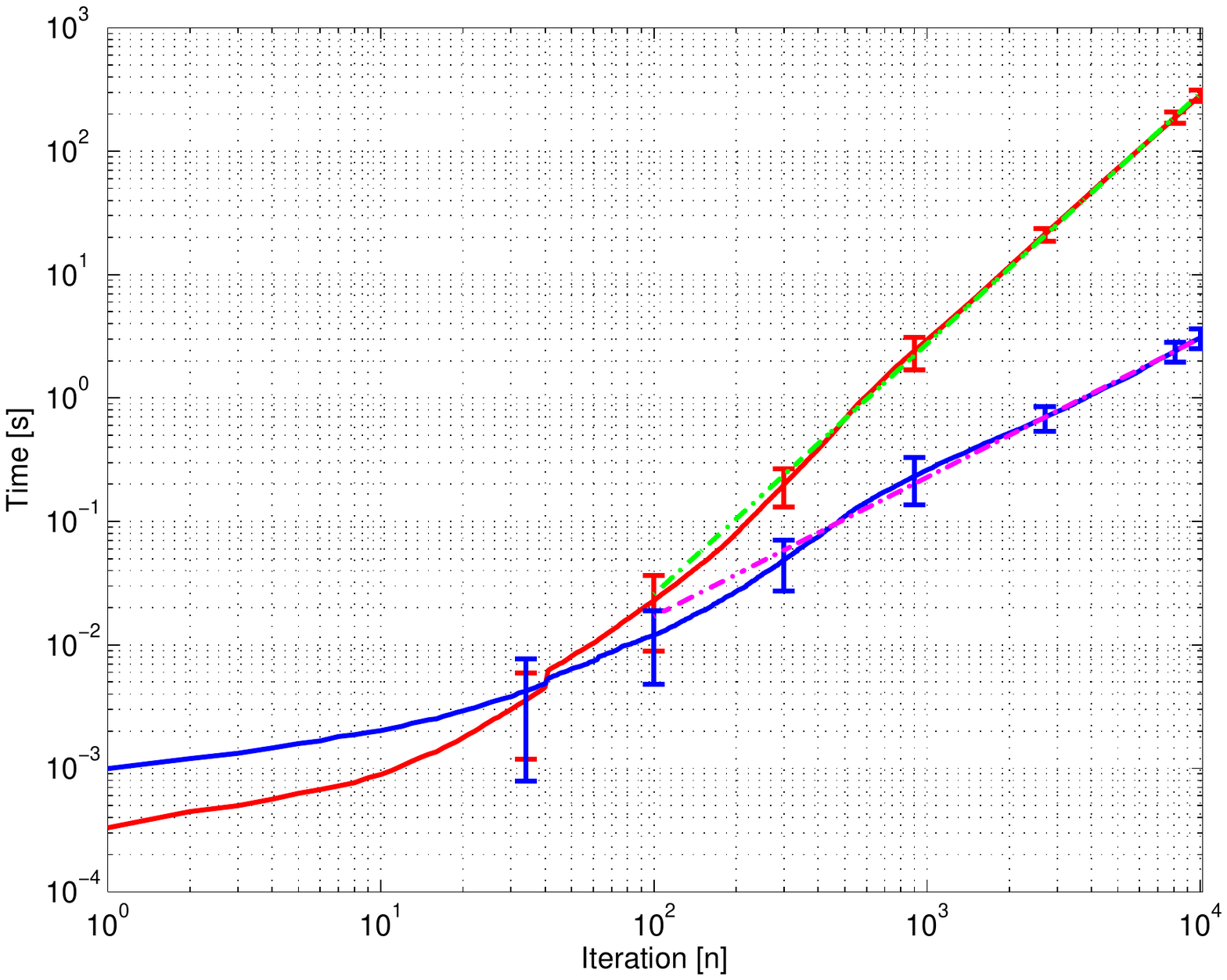}}
\caption{The time required for planning procedures to complete a certain number of iterations for the second problem set. The time curve for \AlgRRTsharp{} and PI-\AlgRRTsharp{} are shown in blue and red, respectively. Vertical bars denote standard deviation averaged over 100 trials.} \label{figure:planning_time_pt3}
%\caption{The change in the cost of the best solutions computed by the \AlgRRTstar{}, \AlgRRTsharp{} and variant algorithms. Vertical bars and filled circles denote standard deviation and completion time averaged over 100 trials, respectively.} \label{figure:time_cost_mean_d6_all}
\end{figure}
}

{
\setlength{\intextsep}{4pt}
\begin{figure}[htp]
\centering
% l b r t %4.0cm 3.0cm 4.0cm 3.0cm
\scalebox{0.45}{\includegraphics[trim = 1.8cm 6.9cm 2.2cm 7.1cm, clip = true]{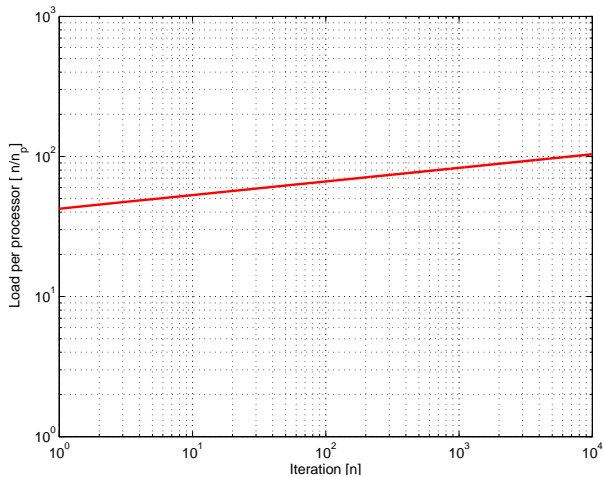}}
\caption{The time required for non-planning procedures to complete a certain number of iterations for the second problem set. The time curve for \AlgRRTsharp{} and PI-\AlgRRTsharp{} are shown in blue and red, respectively. Vertical bars denote standard deviation averaged over 100 trials.} \label{figure:load_curve_pt3}
%\caption{The change in the cost of the best solutions computed by the \AlgRRTstar{}, \AlgRRTsharp{} and variant algorithms. Vertical bars and filled circles denote standard deviation and completion time averaged over 100 trials, respectively.} \label{figure:time_cost_mean_d6_all}
\end{figure}
}

\FloatBarrier
\newpage

\section{Conclusion}

We show that a connection between DP and RRGs may yield different types of sampling-based motion planning algorithms  that utilize ideas from dynamic programming.
These algorithms ensure asymptotic optimality (with probability one) as the number of samples tends to infinity.
Use of policy iteration, instead of value iteration during the exploitation step, may offer several advantages, such as completely parallel implementation, avoidance of sorting and maintaining a queue of all sampled vertices in the graph, etc. 
We have implemented these ideas in the replanning step of the \AlgRRTsharp{} algorithm.  
The proposed PI-\AlgRRTsharp{} algorithm can be massively parallelized,  which can be exploited by 
taking advantage of the recent computational and technological advances of GPUs. This is part of ongoing work.

\newpage

\bibliography{rrtsharp_pi_report_final}

%\bibliography{references/iros2013_ref0,references/ref0,references/ref1,references/cowlagi}
%\bibliographystyle{ieeetr}
\bibliographystyle{plain}

\end{document}